\newcolumntype{I}{!{\vrule width 2pt}}
\newlength\savedwidth
\newlength\savewidth
\newtheorem{theorem}{Theorem}
\newtheorem{proposition}[theorem]{Proposition}
\theoremstyle{definition}
\begin{document}
%\linenumbers
% paper title
% Titles are generally capitalized except for words such as a, an, and, as,
% at, but, by, for, in, nor, of, on, or, the, to and up, which are usually
% not capitalized unless they are the first or last word of the title.
% Linebreaks \\ can be used within to get better formatting as desired.
% Do not put math or special symbols in the title.

\title{Domain Adaptation by Class Centroid \\ Matching and Local Manifold Self-Learning}
%
%
% author names and IEEE memberships
% note positions of commas and nonbreaking spaces ( ~ ) LaTeX will not break
% a structure at a ~ so this keeps an author's name from being broken across
% two lines.
% use \thanks{} to gain access to the first footnote area
% a separate \thanks must be used for each paragraph as LaTeX2e's \thanks
% was not built to handle multiple paragraphs
%

\author{Lei Tian*,~
        Yongqiang Tang*,~
		Liangchen Hu,~
        Zhida Ren,~
		and Wensheng~Zhang~% <-this % stops a space
\thanks{* indicates equal contributions.}
\thanks{L. Tian, Z. Ren and W. Zhang are with the Research Center of Precision Sensing and Control, Institute of Automation, Chinese Academy of Sciences, Beijing, 100190, China, and University of Chinese Academy of Sciences, Beijing, 101408, China.
E-mail:\{tianlei2017, renzhida2017\}@ia.ac.cn, zhangwenshengia@hotmail.com.}
\thanks{Y. Tang is with the Research Center of Precision Sensing and Control, Institute of Automation, Chinese Academy of Sciences, Beijing, 100190, China.
E-mail:tangyongqiang2014@ia.ac.cn}
\thanks{L. Hu is with the School of Computer Science and Engineering, Nanjing University of Science and Technology, Nanjing, 210094, China.
	E-mail:hlc\_clear@foxmail.com.}
}% <-this % stops a space
%\thanks{Manuscript received April 19, 2005; revised August 26, 2015.}}

% note the % following the last \IEEEmembership and also \thanks - 
% these prevent an unwanted space from occurring between the last author name
% and the end of the author line. i.e., if you had this:
% 
% \author{....lastname \thanks{...} \thanks{...} }
%                     ^------------^------------^----Do not want these spaces!
%
% a space would be appended to the last name and could cause every name on that
% line to be shifted left slightly. This is one of those "LaTeX things". For
% instance, "\textbf{A} \textbf{B}" will typeset as "A B" not "AB". To get
% "AB" then you have to do: "\textbf{A}\textbf{B}"
% \thanks is no different in this regard, so shield the last } of each \thanks
% that ends a line with a % and do not let a space in before the next \thanks.
% Spaces after \IEEEmembership other than the last one are OK (and needed) as
% you are supposed to have spaces between the names. For what it is worth,
% this is a minor point as most people would not even notice if the said evil
% space somehow managed to creep in.

% The paper headers
%\markboth{Journal of \LaTeX\ Class Files}%
\markboth{IEEE TRANSACTIONS ON IMAGE PROCESSING, 2019}%
{Shell \MakeLowercase{\textit{et al.}}: Bare Demo of IEEEtran.cls for IEEE Journals}
% The only time the second header will appear is for the odd numbered pages
% after the title page when using the twoside option.
% 
% *** Note that you probably will NOT want to include the author's ***
% *** name in the headers of peer review papers.                   ***
% You can use \ifCLASSOPTIONpeerreview for conditional compilation here if
% you desire.

% If you want to put a publisher's ID mark on the page you can do it like
% this:
%\IEEEpubid{0000--0000/00\$00.00~\copyright~2015 IEEE}
% Remember, if you use this you must call \IEEEpubidadjcol in the second
% column for its text to clear the IEEEpubid mark.

% use for special paper notices
%\IEEEspecialpapernotice{(Invited Paper)}

% make the title area
\maketitle

% As a general rule, do not put math, special symbols or citations
% in the abstract or keywords.
\begin{abstract}
Domain adaptation has been a fundamental technology for transferring knowledge from a source domain to a target domain. The key issue of domain adaptation is how to reduce the distribution discrepancy between two domains in a proper way such that they can be treated indifferently for learning. In this paper, we propose a novel domain adaptation approach, which can thoroughly explore the data distribution structure of target domain.Specifically, we regard the samples within the same cluster in target domain as a whole rather than individuals and assigns pseudo-labels to the target cluster by class centroid matching. Besides, to exploit the manifold structure information of target data more thoroughly, we further introduce a local manifold self-learning strategy into our proposal to adaptively capture the inherent local connectivity of target samples. An efficient iterative optimization algorithm is designed to solve the objective function of our proposal with theoretical convergence guarantee. In addition to unsupervised domain adaptation, we further extend our method to the semi-supervised scenario including both homogeneous and heterogeneous settings in a direct but elegant way.
Extensive experiments on seven benchmark datasets validate the significant superiority of our proposal in both unsupervised and semi-supervised manners.
\end{abstract}

% Note that keywords are not normally used for peerreview papers.
\begin{IEEEkeywords}
domain adaptation, class centroid matching, local manifold self-learning.
\end{IEEEkeywords}

% For peer review papers, you can put extra information on the cover
% page as needed:
% \ifCLASSOPTIONpeerreview
% \begin{center} \bfseries EDICS Category: 3-BBND \end{center}
% \fi
%
% For peerreview papers, this IEEEtran command inserts a page break and
% creates the second title. It will be ignored for other modes.
\IEEEpeerreviewmaketitle

\section{Introduction \label{Introduction}}
% The very first letter is a 2 line initial drop letter followed
% by the rest of the first word in caps.
% 
% form to use if the first word consists of a single letter:
% \IEEEPARstart{A}{demo} file is ....
% 
% form to use if you need the single drop letter followed by
% normal text (unknown if ever used by the IEEE):
% \IEEEPARstart{A}{}demo file is ....
% 
% Some journals put the first two words in caps:
% \IEEEPARstart{T}{his demo} file is ....
% 
% Here we have the typical use of a "T" for an initial drop letter
% and "HIS" in caps to complete the first word.

\IEEEPARstart{I}{n} many real-world applications, data are generally collected under different conditions, thus hardly satisfying the identical probability distribution hypothesis which is known as a foundation of statistical learning theory. This situation naturally leads to a crucial issue that a classifier trained on a well-annotated source domain cannot be applied to a related but different target domain directly. To surmount this issue, as an important branch of transfer learning, considerable efforts have been devoted to domain adaptation \cite{Pan2009}. By far, domain adaptation has been a fundamental technology for cross-domain
\begin{figure}[htbp]
	\setlength{\abovecaptionskip}{0pt}
	\setlength{\belowcaptionskip}{0pt}
	\renewcommand{\figurename}{Figure}
	\centering
	\includegraphics[width=0.45\textwidth]{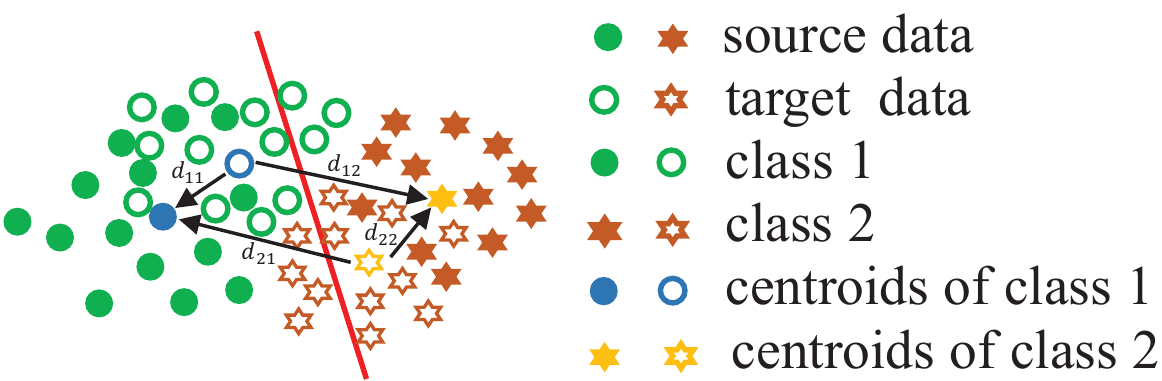}
	\caption{A toy example of misclassification caused by the ignorance of structure information of data distribution, where $d_{ij}$ indicates the distance between the centroids of $i$-th class in target domain and the centroids of $j$-th class in source domain.}
	\label{motivation}
\end{figure}
knowledge discovery, and been considered in various tasks, such as 
object recognition \cite{Guo2013,Rozantsev2018}, face recognition \cite{Ren2014,Qiu2015} and person re-identification \cite{Ma2015}.

\begin{figure*}[h]
	\setlength{\abovecaptionskip}{0pt}
	\setlength{\belowcaptionskip}{0pt}
	\renewcommand{\figurename}{Figure}
	\centering
	\includegraphics[width=0.98\textwidth]{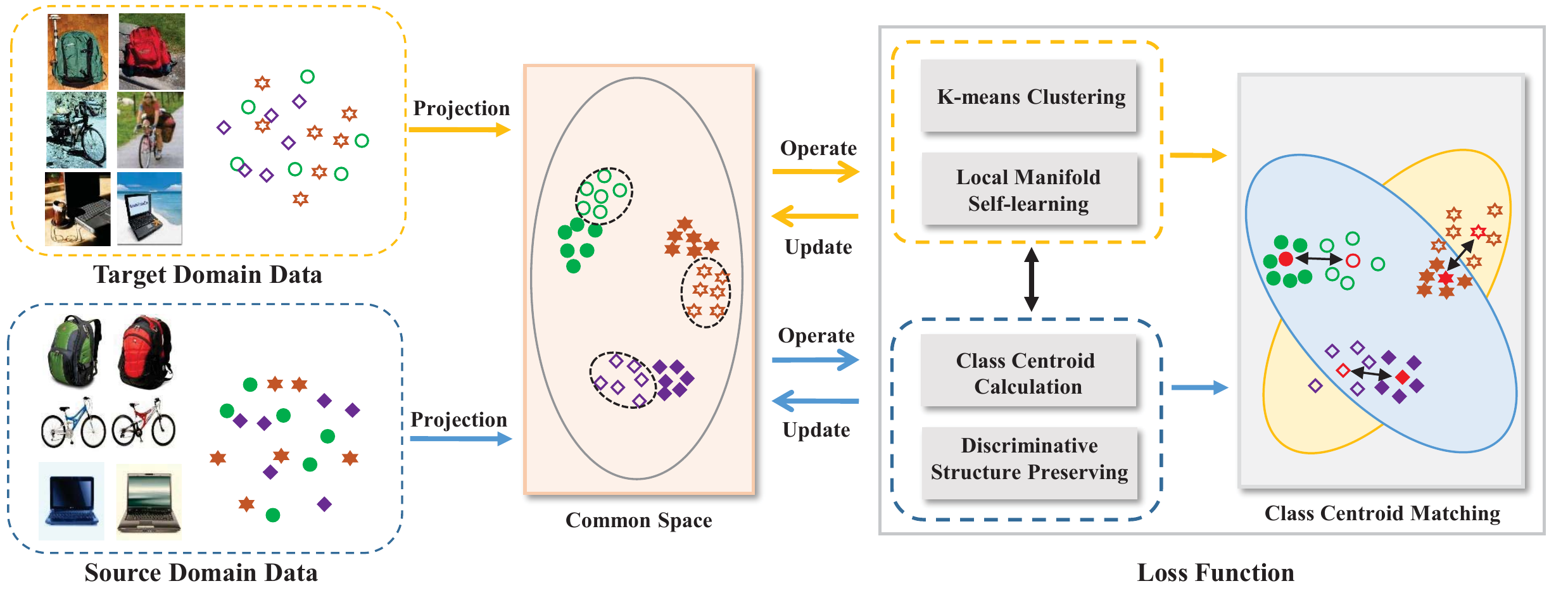}
	\caption{Flowchart of our proposed method. We reformulate the domain adaptation problem as the class centroid matching problem, such that the structure information of data distribution can be exploited. For target data, we further introduce the local manifold self-learning strategy  to explore the inherent local connectivity structure. The yellow and blue pipelines indicate the operations for target domain and source domain, respectively. }
	\label{Framework}
\end{figure*}

The major issue for domain adaptation is how to reduce the distributions difference between the source and target domains \cite{Long2013-ARTL}. Most of recent works aim to seek a common feature space where the distribution difference across domains are minimized \cite{Pan2010,Long2013-JDA,Gong2012,Fernando2013,Sun2016}. To accomplish this, various metrics have been proposed to measure the discrepancy of data distribution, such as correlation distances \cite{COR} or maximum mean discrepancy (MMD) \cite{Gretton2007}.  In this work, we try to thoroughly explore the data distribution structure of target domain with a class centroid matching scheme. To illustrate our motivation more explicitly, a toy example is shown in Fig. \ref{motivation}. The red line is a discriminant hyperplane trained on source data in the projected feature space. As we can see, when the distribution discrepancy between two domains still exists, the hyperplane tends to misclassify target samples if they are labeled independently. In such case, the misclassified samples will seriously mislead the learning of the common feature space in the subsequent iterations, and ultimately cause significant performance drop.  Nevertheless, by the in-depth analysis of Fig. \ref{motivation}, we could further discover that the class centroids in target domain can be well matched to their corresponding class centroids in source domain. In light of this,  when we treat the samples within the same cluster in target domain as a whole  rather than individuals, they could share identical label and obtain accurate classification result.
Motivated by this insight, in this paper, instead of labeling target samples individually, we aim to introduce a novel domain adaptation approach that assigns pseudo-labels to target samples with the guidance of class centroids in two domains, such that the data distribution structure of target domain can be emphasized.

%However, in practice, such hypothesis might be hard to satisfy.More generally, two domains may have different classification hyperplanes. In such case, classifiers trained from source domain will suffer from significant performance drop when applied to the target domain. To illustrate this visually,  a toy example is shown in Figure \ref{motivation} (a). The red line is the hyperplane trained on source data in the projected feature space. As we can see, although target data can be separated properly by the blue hyperplane in the same projection space, the red hyperplane obtained by existing methods based on the discriminant hyperplane tends to misclassify the target data due to the distribution divergence. This insight prompts us to rethink the procedure of assigning pseudo-labels to the target samples.

To achieve this goal, the first issue to be handled is the calculation of class centroids in two domains. Due to the labels available in source domain, we can conveniently  acquire the class centroids of source data in the common feature space by calculating the mean of samples in the same class.  For target domain where the labels are absent, we resort to the classical $K$-means clustering algorithm \cite{Macqueen1967} which has been widely used to partition unlabeled data into several groups where similar samples in the same group can be represented by a specific cluster prototype. Intuitively, the cluster prototypes obtained by $K$-means algorithm can be regarded as a good approximation for the class centroids of target domain. After obtaining the class centroids of both source data and target data, the distribution discrepancy minimization problem in domain adaptation can be reformulated as the class centroid matching problem which can be  solved efficiently by the nearest neighbor search.  

The next key concern is how to improve the quality of class centroids, which is crucial to the performance of our approach. In this paper, for labeled source data, we adopt an intuitive strategy that encourages the samples in the identical class to be as close as possible in the projected space, such that the discriminative structure information of source domain can be preserved \cite{Li2018}. As for unlabeled target data, it has been shown that the clustering performance can be significantly enhanced if the local manifold structure is exploited \cite{Goh2007,Goh2008}. Nevertheless, most of existing manifold learning methods highly depend on the predefined similarity matrix built in the original feature space \cite{Saul2003,Belkin2002}, and thus may fail to capture the inherent local structure of high-dimensional data due to the curse of dimensionality. To tackle this problem, inspired by the recently proposed adaptive neighbors learning method \cite{Nie2014}, we introduce a local structure self-learning strategy into our proposal. Specifically, we learn the data similarity matrix according to the local connectivity in the projected low-dimensional feature space rather than the original high-dimensional space, such that the intrinsic local manifold structure of target data can be captured adaptively.

Based on above analysis, a novel domain adaptation method, which can adequately exploit the data distribution structure of target domain by jointly class Centroid Matching and local Manifold Self-learning (CMMS), is naturally proposed.  It is noteworthy that,
more recently, the need for tackling semi-supervised domain adaptation (SDA) problem is growing as there may be some labeled target samples in practice \cite{Hoffman2014,Herath2017,Wang2019,Li-2013,Hsieh2016}. While unsupervised domain adaptation (UDA) methods are well established, most of them cannot be naturally applied to the semi-supervised scenario. Excitingly, the proposed CMMS can be extended to SDA including both homogeneous and heterogeneous settings in a direct but elegant way. The flowchart of our proposed CMMS
is shown in Fig. \ref{Framework}. 
The main contributions of this paper are summarized as follows:
\begin{itemize}
\item We propose a novel domain adaptation method called CMMS, which can thoroughly explore the data distribution structure of target domain via jointly class centroid matching and local manifold self-learning.
\item We present an efficient optimization algorithm to solve the objective function of the proposal, with theoretical convergence guarantee.
\item In addition to unsupervised domain adaptation, we further extend our approach to the semi-supervised scenario including both homogeneous and heterogeneous settings.
\item We conduct extensive evaluation of our method on seven  benchmark datasets, which validates the superior performance of our method in both unsupervised and semi-supervised manners.
\end{itemize}

The rest of this paper is organized as follows. Section \ref{Related work} previews some related literature. Section \ref{Proposed Method} shows our proposal, the optimization algorithm, the convergence and complexity analysis.  We describe our semi-supervised extension in Section  \ref{SDA extension}. Massive experimental results are shown in Section \ref{Experiments}. Finally, we conclude this paper in Section \ref{Conclusions}.

\section{Related Work \label{Related work}}
In this section, we review some previous works closely related to this paper. First, we briefly review the unsupervised domain adaptation methods. Next, related studies of semi-supervised domain adaptation are reviewed. Finally, we introduce some local manifold learning techniques.

\subsection{Unsupervised Domain Adaptation}
Unsupervised domain adaptation aims to handle the scenario where labeled samples are only available from the source domain and there exists different distributions between source and target domains. In the past decades, numerous methods have been proposed to overcome the distribution discrepancy. 

Existing UDA methods can be classified as: 1) instance reweighting \cite{Long2014-TJM, Chen2016}, 2) classifier adaptation \cite{Long2013-ARTL, Wang2018-MEDA}, and 3) feature adaptation \cite{Pan2010,Long2013-JDA,Gong2012,Fernando2013}. We refer the interested readers to \cite{LeiZhang2019}, which contains an excellent survey. Our proposal falls into the third category, i.e., feature adaptation, which addresses domain shift by either searching intermediate subspaces to achieve domain transfer \cite{Gong2012,Fernando2013} or learning a common feature space where the source and target domains have similar distributions \cite{Pan2010,Long2013-JDA}. In this paper, we focus on the latter line. Among existing works, TCA \cite{Pan2010} is a pioneering approach, which learns a transformation matrix to align marginal distribution between two domains via MMD. Later, JDA \cite{Long2013-JDA} considers conditional distribution alignment by forcing the class means to be close to each other. In the subsequent research, several works further propose to employ the discriminative information to facilitate classification performance. For instance, Li \emph{et al}. \cite{Li2018} utilized the discriminative information for the source and target domains by encouraging intra-class compactness and inter-class dispersion. Liang \emph{et al}. \cite{Liang2018} achieved this goal by promoting class clustering.

Despite the promising performance, all the above methods classify target samples independently, which may cause misclassification since the structure information of data distribution in target domain is ignored. To tackle this issue, several recent works attempt to exploit the distribution structure of target data via clustering. For example, Liang \emph{et al}. \cite{Liang2019} proposed to seek a subspace where the target centroids are forced to approach those in the source domain. Inspired by the fact that the target samples are well clustered in the deep feature space, Wang \emph{et al}. \cite{Wang2019-SPL} proposed a selective pseudo-labeling approach (SPL) based on structured prediction, which is the most relevant to our proposal. Nevertheless, our proposal is significantly different from it. First, these two approaches are essentially two different formalization frameworks. To be specific, our proposal integrates the projection matrix learning, the class centroid calculation and the class centroid matching into a unified optimization objective, which is more intuitive and concise. Second, in our framework, we further present the specific strategies for source  and target data respectively, including discriminative structure preserving for source data and local manifold self-learning for target data, to improve the quality of class centroids. Third, our formalization owns better scalability, which can be naturally extended to semi-supervised scenario with little  modification of optimization algorithm.

More recently, deep domain adaptation methods has attracted much more attention in the past few years  \cite{LeiZhang2019}, which are generally achieved by adding adaptation layers to align the means of distributions \cite{DAN,RTN,TPN} or adding a subnetwork to confuse the source and target domains \cite{UDABP, ADDA, IADDA, SimNet,PFAN}. For instance, Pinheiro \cite{SimNet} compared the embedding of a target image with the source prototypes and the label that best matches it is given. Chen \emph{et al}. \cite{PFAN} employed a different manner by matching the source and target prototypes for each class via adaptive prototype alignment. Pan \emph{et al}. \cite{TPN} innovatively proposed transferrable prototypical networks (TPN) for unsupervised domain adaptation, which enforces the prototypes of each class in source and target domains to be close in the embedding space and the score distributions predicted by prototypes separately on source and target data to be similar. Both TPN and our CMMS share the similar idea of using prototypes to represent the class distribution. Nevertheless, there are significant differences between the two approaches. First, in the process of assigning pseudo-label to target data, TPN treats the target samples as individuals and independently matches each target sample to the nearest prototype in the source domain. As shown in the toy example in Fig.\ref{motivation}, the drawback of such procedure lies in the neglect of  the distribution structure of target data. We handle this issue by first treating the samples within the same cluster in target domain as a whole, and then assigning the label of nearest source class to the target cluster. Second, the exploration of local manifold structure of target data plays a vital role in the learning of common space, which is ignored by TPN. By contrast, we introduce local manifold self-learning strategy into our CMMS to capture the intrinsic local connectivity.

\subsection{Semi-supervised Domain Adaptation}
Unlike the unsupervised domain adaptation that no labels are available, in practice, a more common scenario is that the target domain contains a few labeled samples. Such scenario leads to a promising research direction, which is referred as the semi-supervised domain adaptation. 

According to the property of sample features, SDA algorithms are developed in two different settings: 1) homogeneous setting, i.e., the source and target data are sampled from the same feature space; 2) heterogeneous setting, i.e., the source and target data often have different feature dimensions. In the homogeneous setting, the labeled target samples are used in various ways. For example, Hoffman \emph{et al.} \cite{Hoffman2014} jointly learnt the transformation matrix and classifier parameters, forcing the source and target samples with identical label have high similarity. Similarly, Herath \emph{et al.} \cite{Herath2017} proposed to learn the structure of a Hilbert space to reduce the dissimilarity between labeled samples and further match the source and target domains via the second order statistics. Recently, based on Fredholm integral, Wang \emph{et al.} \cite{Wang2019} proposed to learn a cross-domain kernel classifier that can classify the labeled target data correctly using square loss function or hinge loss function. In the heterogeneous setting, relieving feature discrepancy and reducing distribution divergence are two inevitable issues \cite{Li2018-PA}. For the first issue, one incredibly simple approach \cite{Li-2013} is to use the original features or zeros to augment each transformed sample into same size. Another natural approach \cite{Hsieh2016} is to learn two projection matrices to derive a domain-invariant feature subspace, one for each domain. Recently, after employing two matrices to project the source and target data to a common feature space, Li \emph{et al.} \cite{Li2018-PA} employed a shared codebook to match the new feature representations on the same bases. For the second issue, one favorite solution is to minimize the MMD distance of the source and target domains \cite{Hsieh2016,Li2018-PA}. Additionally, Tsai \emph{et al.} \cite{Tsai2016} proposed a representative landmark selection approach, which is similar to instance reweighting in the UDA scenario. When we obtain a limited amount of labeled target samples,  manifold regularization, an effective strategy for semi-supervised learning, has also been employed by several previous works \cite{Xiao2014,Yao2015}.

In contrast to these SDA methods, our semi-supervised extension is quite simple and intuitive. To be specific, the labeled target data are used to improve the cluster prototypes learning of the unlabeled target data. Besides, connections between the labeled and unlabeled target data are built, which is a common strategy to develop a semi-supervised model. Different from the homogeneous setting where a unified projection is learnt, we learn two projection matrices in the heterogeneous setting like \cite{Hsieh2016}. Notably, the resulting optimization problems in two settings own the same standard formula and can be solved by the same algorithm in UDA scenario with just very tiny modifications.

\subsection{Local Manifold Learning \label{SLG}}
The goal of local manifold learning is to capture the underlying local manifold structure of the given data in the original high-dimensional space and preserve it in the low-dimensional embedding. Generally, local manifold learning methods contain three main steps: 1) selecting neighbors; 2) computing affinity matrix; 3) calculating the low-dimensional embedding \cite{Hong2017}. 

Local linear embedding \cite{Saul2003} and Laplacian eigenmaps \cite{Belkin2002} are two typical methods. In local linear embedding, the local manifold structure is captured by linearly reconstructing each sample using the corresponding neighbors in the original space and the reconstruction coefficients are preserved in the low-dimensional space. In Laplacian eigenmaps, the adjacency matrix of given data is obtained in the original feature space using Gaussian function. However, the local manifold structure is artificially captured using pairwise distances with heat kernel, which brings relatively weak representation for the ignorance of the properties of local neighbors \cite{Hong2017}. Recently, to learn a more reliable adjacency matrix, Nie \emph{et al.} \cite{Nie2014} proposed to assign the neighbors of each sample adaptively based on the Euclidean distances in the low-dimensional space. This strategy has been widely utilized in clustering \cite{Zhan2018}, feature selection \cite{Hou2017} and feature representation learning \cite{Wang2018-FML}. 

In domain adaptation problems, several works have borrowed the advantages of local manifold learning. For example, Long \emph{et al.} \cite{Long2013-ARTL} and Wang \emph{et al.} \cite{Wang2018-MEDA} employed manifold regularization to maintain the manifold consistency underlying the marginal distributions of two domains. Hou \emph{et al.} \cite{Hou2016} and Li \emph{et al.} \cite{Li2019-LPJT} used label propagation to predict target labels. However, they all calculate adjacency matrix in the original high-dimensional space with the predefined distance measurement, which is unreliable due to the curse of dimensionality. By contrast, our proposal can capture and employ the inherent local manifold  structure of target data adaptively, thus leading to superior performance.

\section{Proposed Method \label{Proposed Method}}
In this section,  we first introduce the notations and basic concepts used throughout this paper. Then, the details of our approach are described. Next, an efficient algorithm is designed to solve the optimization problem of our proposal. Finally, the convergence and complexity analysis of the optimization algorithm are given.

\subsection{Notations}
A domain $\mathcal{D}$ contains a feature space $\chi$ and a marginal probability distribution $P(\mathbf{X})$, where $\mathbf{X} \in \chi$. For a specific domain, a task $\mathcal{T}$ consists of a label space  $\mathcal{Y}$ and a labeling function $f(\mathbf{x})$, denoted by $\mathcal{T} = \{\mathcal{Y}, f(\mathbf{x})\}$ \cite{Pan2009}. For simplicity, we use subscripts $s$ and $t$ to describe the source domain and target domain, respectively.

We denote the source domain data as $\mathcal{D}_s = \{\mathbf{X}_s,\mathbf{Y}_s\} = \{(\mathbf{x}_{si},y_{si})\}_{i=1}^{n_s}$, where $\mathbf{x}_{si} \in \mathbb{R}^{m}$ is a source sample and $y_{si} \in \mathbb{R}$ is the corresponding label. Similarly, we denote the target domain data as $\mathcal{D}_t = \{\mathbf{X}_t\} = \{\mathbf{x}_{tj}\}_{j=1}^{n_t}$, where $\mathbf{x}_{tj} \in \mathbb{R}^{m}$. For clarity, we show the key notations used in this paper and the corresponding descriptions in Table \ref{notations}.
\begin{table}[]
\centering
\caption{Frequently used notations and descriptions}
\label{notations}
\begin{tabular}{ccc}
\toprule
Notation & \quad Description\\
\midrule
$\mathbf{X}_s/\mathbf{X}_t$&  \quad source/target original data\\
$n_s/n_t$&  \quad number of source/target samples\\
$\mathbf{P}$& \quad projection matrix\\
$\mathbf{F}$&  \quad target cluster centroids\\
$\mathbf{G}_t$&  \quad target label matrix\\
$\mathbf{S}$&  \quad target adjacency matrix\\
$\mathbf{H}$& \quad  centering matrix\\
$\mathbf{I}_d$ & \quad identity matrix with dimension $d$\\
$m$ & \quad dimension of original features\\
$d$ & \quad dimension of projected features\\
$C$&  \quad number of shared class\\
$n_s^c$&  \quad number of source samples in class $c$\\
$\mathbf{0}_{p \times q} / \mathbf{1}_{p \times q}$ &  \qquad a matrix of size $p \times q$ with all elements as $0 / 1$ \\
$\mathbf{0}_{p}/\mathbf{1}_{p}$ & \quad a column vector of size $p$ with all elements as $0 / 1$ \\
\bottomrule
\end{tabular}
\end{table}
\subsection{Problem Formulation \label{Problem Formulation}}
The core idea of our CMMS lies in the emphasis on data distribution structure by class centroid matching of two domains and local manifold structure self-learning for target data. The overall framework of CMMS can be stated by the following formula:
\begin{equation}
\label{E1}
\min_{\mathbf{P},\mathbf{F},\mathbf{G}_t,\mathbf{S}}\Omega(\mathbf{P},\mathbf{F}) + \alpha \Theta(\mathbf{P},\mathbf{F},\mathbf{G}_t) + \gamma \Psi(\mathbf{P},\mathbf{S})+ \beta \Phi(\mathbf{P})
\end{equation}
The first term $\Omega(\mathbf{P},\mathbf{F})$ is used to match class centroids. $\Theta(\mathbf{P},\mathbf{F},\mathbf{G}_t)$ is the clustering term for target data in the projected space. $\Psi(\mathbf{P},\mathbf{S})$ is employed to capture the data structure information. $\Phi(\mathbf{P})$ is the regularization term to avoid overfitting. Hyper-parameters $\alpha$, $\beta$ and $\gamma$ are employed to balance the influence of different terms. Next, we will introduce these items in detail.

\subsubsection{$K$-means Clustering for Target Data} In our CMMS, we borrow the idea of clustering to obtain the cluster prototypes which can be regarded as the pseudo class centroids. In such case, the sample distribution structure information of target data can be acquired. To achieve this goal, various existing clustering algorithms can be our candidates. Without loss of generality, for the sake of simplicity, we adopt the classical $K$-means algorithm to get the cluster prototypes in this paper. Thus, we have the following formula:
\begin{equation}
\label{E2}
\Theta(\mathbf{P},\mathbf{F},\mathbf{G}_t)=\|\mathbf{P}^{\mathrm{T}}\mathbf{X}_t-\mathbf{FG}_t^\mathrm{T}\|_F^2\\
\end{equation}
where $\mathbf{P} \in \mathbb{R}^{m \times d}$ is the projection matrix, $\mathbf{F} \in \mathbb{R}^{d \times C}$ is the cluster centroids of target data, $\mathbf{G}_t \in \mathbb{R}^{n_t \times C}$ is the cluster indicator matrix of target data which is defined as $(\mathbf{G}_t)_{ij} = 1$ if the cluster label of $\mathbf{x}_{ti}$ is $j$, and $(\mathbf{G}_t)_{ij} = 0$ otherwise.

\subsubsection{Class Centroid Calculation for Source Data and Class Centroid Matching of Two Domains} 
Once the cluster prototypes of target data are obtained, we can reformulate the distribution discrepancy minimization problem in domain adaptation as the class centroid matching problem.  Note that the class centroids of source data can be obtained exactly by calculating the mean value of sample features in the identical class. 
In this paper, we solve the class centroid matching problem by the nearest neighbor search since it is simple and efficient. Specifically, we search the nearest source class centroid for each target cluster centroid, and minimize the sum of distance of each pair of class centroids. Finally, the class centroid matching of two domains is formulated as: 
\begin{equation}
\label{E3}
\Omega(\mathbf{P},\mathbf{F})=\|\mathbf{P}^\mathrm{T}\mathbf{X}_s\mathbf{E}_s-\mathbf{F}\|_F^2
\end{equation}
where $\mathbf{E}_s \in \mathbb{R}^{n_s \times C}$ is a constant matrix used to calculate the class centroids of source data in the projected space with each element $\mathbf{E}_{ij} = 1/n_s^j$ if $y_{si} = j$, and $\mathbf{E}_{ij} = 0$ otherwise.

\subsubsection{Local Manifold Self-Learning for Target Data}
In our proposed CMMS, the cluster prototypes of target samples are actually the approximation of their corresponding class centroids. Hence, the quality of cluster prototypes plays an important role in the final performance of our CMMS. Existing works have proven that the performance of clustering can be significantly improved by the exploiting of local manifold structure. Nevertheless, most of them highly depend on the predefined adjacent matrix in the original feature space, and thus fail to capture the inherent local manifold structure of high-dimensional data due to the curse of dimensionality. For this issue,  inspired by the recent work \cite{Nie2014}, we propose to introduce a local manifold self-learning strategy into our CMMS. Instead of predefining the adjacent matrix in the original high-dimensional space, we adaptively learn the data similarity according to the local connectivity in the projected low-dimensional space, such that the intrinsic local manifold structure of target data can be captured. 
The formula of local manifold self-learning is shown as follows:
\begin{equation}
\label{E4}
\begin{split}
&\Psi_t(\mathbf{P},\mathbf{S}) = \sum\limits_{i,j=1}^{n_t}\|\mathbf{P}^\mathrm{T}\mathbf{x}_{ti}-\mathbf{P}^\mathrm{T}\mathbf{x}_{tj}\|_2^2S_{ij}+\delta{S_{ij}^2}\\
& \qquad \quad \ \ = 2\mathrm{tr}(\mathbf{P}^\mathrm{T}\mathbf{X}_t\mathbf{L}_t\mathbf{X}_t^\mathrm{T}\mathbf{P}) + \delta\|\mathbf{S}\|_F^2\\
&\qquad \quad s.t.\  \ \mathbf{S}\mathbf{1}_{n_t} = \mathbf{1}_{n_t}, 0 \leq S_{ij} \leq 1
\end{split}
\end{equation}
where $\mathbf{S} \in \mathbb{R}^{n_t \times n_t}$ is the adjacency matrix in target domain and $\delta$ is a hyper-parameter. $\mathbf{L}_t$ is the corresponding graph laplacian matrix calculated by $\mathbf{L}_t = \mathbf{D} - \mathbf{S}$, where $\mathbf{D}$ is a diagonal matrix with each element $\mathbf{D}_{ii} = \sum_{j \neq i}\mathbf{S}_{ij}$.
\subsubsection{Discriminative Structure Preserving for Source Data}The above descriptions have highlighted the main components of our CMMS. Intuitively, a reasonable hypothesis for source data is that the samples in the identical class should be as close as possible in the projected space, such that the discriminative structure information of source domain can be preserved. As one trivial but effective trick, inspired by \cite{Li2018}, we formulate this thought as follows:
\begin{equation}
\label{E5}
\begin{aligned}
\Psi_s(\mathbf{P}) &= \sum\limits_{c=1}^C \frac {1}{n_s^c}\sum\limits_{y_{si},y_{sj} = c}\|\mathbf{P}^\mathrm{T}\mathbf{x}_{si}-\mathbf{P}^\mathrm{T}\mathbf{x}_{sj}\|_2^2 \\
=\sum\limits_{i,j=1}^{n_s}&\|\mathbf{P}^\mathrm{T}\mathbf{x}_{si}-\mathbf{P}^\mathrm{T}\mathbf{x}_{sj}\|_2^2W_{ij} = 2\mathrm{tr}(\mathbf{P}^\mathrm{T}\mathbf{X}_s\mathbf{L}_s\mathbf{X}_s^\mathrm{T}\mathbf{P})
\end{aligned}
\end{equation}
where $\mathrm{tr}(\cdot)$ is the trace operator and $\mathbf{W}$ is the similarity matrix for source data, which is defined as $W_{ij} = 1/n_s^c$ if $y_{si}=y_{sj} = c$, and 0, otherwise. The coefficient $1/n_s^c$ is used to remove the effects of different class sizes \cite{Wang2016}. $\mathbf{L}_s$ is the respective laplacian matrix and is defined as:
\begin{equation*}
(\mathbf{L}_s)_{ij} =
\begin{cases}
1 - \frac{1}{n_s^c}, &\text{if $i = j$;}\\
-\frac {1}{n_s^c}, &\text{if $ i \neq j, y_{si}=y_{sj} = c$;}\\
0, &\text{otherwise.}
\end{cases}
\end{equation*}

For simplicity, we denote $\mathbf{X} = [\mathbf{X}_s,\mathbf{X}_t]$ and $\mathbf{L} = 
\mathrm{diag}(2\mathbf{L}_s,2\mathbf{L}_t)$.  By combining Eq.(\ref{E4}) and Eq.(\ref{E5}), we obtain a general $\Psi(\mathbf{P},\mathbf{S})$ term which can capture the diverse structure information of both source and target data:
\begin{equation}
\label{E6}
\begin{split}
\Psi(\mathbf{P},\mathbf{S})
&=2\mathrm{tr}(\mathbf{P}^\mathrm{T}\mathbf{X}_s\mathbf{L}_s\mathbf{X}_s^\mathrm{T}\mathbf{P}) + 2\mathrm{tr}(\mathbf{P}^\mathrm{T}\mathbf{X}_t\mathbf{L}_t\mathbf{X}_t^\mathrm{T}\mathbf{P})+\delta\|\mathbf{S}\|_F^2 \\
=\mathrm{tr}(\mathbf{P}&^\mathrm{T}[\mathbf{X}_s, \mathbf{X}_t]\begin{bmatrix} 2\mathbf{L}_s & \mathbf{0}_{n_s \times n_t} \\ \mathbf{0}_{n_t \times n_s} & 2\mathbf{L}_t \end{bmatrix}[\mathbf{X}_s, \mathbf{X}_t]^\mathrm{T}\mathbf{P}) +\delta\|\mathbf{S}\|_F^2 \\
=\mathrm{tr}(\mathbf{P}&^\mathrm{T}\mathbf{X}\mathbf{L}\mathbf{X}^\mathrm{T}\mathbf{P}) + \delta\|\mathbf{S}\|_F^2\\
s.t.\  \ \mathbf{S}&\mathbf{1}_{n_t} = \mathbf{1}_{n_t}, \ \ 0 \leq S_{ij} \leq 1
\end{split}
\end{equation}

Besides, to avoid overfitting and improve the generalization capacity, we further add an $F$-norm regularization term to the projection matrix $\mathbf{P}$:
\begin{equation}
\label{E7}
\Phi(\mathbf{P}) = \|\mathbf{P}\|_F^2
\end{equation}

So far, by combining Eq.(\ref{E2}), (\ref{E3}), (\ref{E6}) and (\ref{E7}), we arrive at our final CMMS formulation:
\begin{equation}
\label{E8}
\begin{split}
&\min_{\mathbf{P}, \mathbf{F}, \mathbf{G}_t, \mathbf{S}}\|\mathbf{P}^{\mathrm{T}}\mathbf{X}_s\mathbf{E}_s-\mathbf{F}\|_F^2 + \alpha\|\mathbf{P}^{\mathrm{T}}\mathbf{X}_t-\mathbf{F}\mathbf{G}_t^{\mathrm{T}}\|_F^2 \\
&\qquad\quad+\beta \|\mathbf{P}\|_F^2+\gamma (\mathrm{tr}(\mathbf{P}^\mathrm{T}\mathbf{X}\mathbf{L}\mathbf{X}^{\mathrm{T}}\mathbf{P})+\delta\|\mathbf{S}\|_F^2)\\
&\qquad s.t.\ \mathbf{P}^{\mathrm{T}}\mathbf{XHX}^{\mathrm{T}}\mathbf{P} = \mathbf{I}_d,\ \mathbf{G}_t {\in \{0,1\}^{n_{t} \times C}}, \\
& \qquad \qquad \ \mathbf{S}\mathbf{1}_{n_t} = \mathbf{1}_{n_t}, \ \ 0 \leq S_{ij} \leq 1
\end{split}
\end{equation}
where $\mathbf{I}_d$ is an identity matrix of dimension $d$ and $\mathbf{H}$ is centering matrix defined as $\mathbf{H} = \mathbf{I}_{n_s+n_t}- \frac{1}{n_s+n_t}\mathbf{1}_{(n_s+n_t) \times (n_s+n_t)}$. The first constraint in (\ref{E8}) is inspired by principal component analysis, which aims to maximize projected data variance \cite{Long2013-JDA}. For the sake of simplified format, we reformulate the objective function in (\ref{E8}) as the following standard formula:
\begin{equation}
\label{E9}
\begin{split}
&\min_{\mathbf{P}, \mathbf{F}, \mathbf{G}, \mathbf{S}}\|\mathbf{P}^\mathrm{T}\mathbf{X}\mathbf{E}-\mathbf{F}\|_F^2 + \alpha\|\mathbf{P}^\mathrm{T}\mathbf{X}\mathbf{V}-\mathbf{F}\mathbf{G}^\mathrm{T}\|_F^2 \\
&\qquad \ \ +\beta \|\mathbf{P}\|_F^2 +\gamma (\mathrm{tr}(\mathbf{P}^\mathrm{T}\mathbf{X}\mathbf{L}\mathbf{X}^\mathrm{T}\mathbf{P})+\delta\|\mathbf{S}\|_F^2) \\
&\qquad s.t.\ \mathbf{P}^\mathrm{T}\mathbf{XHX}^\mathrm{T}\mathbf{P} = \mathbf{I}_d,\ \mathbf{G} {\in \{0,1\}^{n \times C}}, \\
&\qquad\qquad  \ \mathbf{S}\mathbf{1}_{n_t} = \mathbf{1}_{n_t}, \ \ 0 \leq S_{ij} \leq 1
\end{split}
\end{equation}
where $n = n_s + n_t$, $\mathbf{V} = \mathrm{diag}(\mathbf{0}_{n_s \times n_s}, \mathbf{I}_{n_t})$, $\mathbf{E} = [\mathbf{E}_s; \mathbf{0}_{n_t \times C}]$, $\mathbf{G} = [\mathbf{0}_{n_s \times C};\mathbf{G}_t]$ and $\mathbf{L} = \mathrm{diag}(\mathbf{L}_s,\mathbf{L}_t)$. 

\subsection{Optimization Procedure}
According to the objective function of our CMMS in Eq.(\ref{E9}), 
there are four variables  $\mathbf{F}$, $\mathbf{P}$, $\mathbf{G}$, $\mathbf{S}$ that need to be optimized. Since it is not jointly convex for all variables, we update each of them alternatively while keeping the other variables fixed. Specifically, each subproblem is solved as follows:

\textbf{1. $\mathbf{F}$-subproblem:}  
When $\mathbf{P}$, $\mathbf{G}$ and $\mathbf{S}$ are fixed, the optimization problem (\ref{E9}) becomes:
\begin{equation}
\label{E10}
\min_{\mathbf{F}}\|\mathbf{P}^\mathrm{T}\mathbf{X}\mathbf{E}-\mathbf{F}\|_F^2 + \alpha\|\mathbf{P}^\mathrm{T}\mathbf{X}\mathbf{V}-\mathbf{F}\mathbf{G}^\mathrm{T}\|_F^2
\end{equation}
By setting the derivative of (\ref{E10}) with respect to $\mathbf{F}$ as 0, we obtain:
\begin{equation}
\label{E11}
\mathbf{F} = (\mathbf{P}^\mathrm{T}\mathbf{X}\mathbf{E} + \alpha \mathbf{P}^\mathrm{T}\mathbf{XVG})(\alpha \mathbf{G}^\mathrm{T}\mathbf{G} + \mathbf{I}_C)^{-1}
\end{equation}

\textbf{2. $\mathbf{P}$-subproblem:} 
Substituting Eq.(\ref{E11}) into Eq.(\ref{E9}) to replace $\mathbf{F}$, we can get the following subproblem:
\begin{equation}
\label{E12}
\begin{split}
&\min_{\mathbf{P}} \ \mathrm{tr}(\mathbf{P}^\mathrm{T}(\mathbf{XRX}^\mathrm{T} + \gamma \mathbf{XLX}^\mathrm{T} + \beta \mathbf{I}_m)\mathbf{P})\\
&\qquad s.t.\ \mathbf{P}^\mathrm{T}\mathbf{XHX}^\mathrm{T}\mathbf{P} = \mathbf{I}_d
\end{split}
\end{equation}
where $\mathbf{R} = \mathbf{EE}^\mathrm{T}-\mathbf{E}(\alpha \mathbf{G}^\mathrm{T}\mathbf{G} + \mathbf{I}_C)^{-1}(\mathbf{E}+ \alpha \mathbf{VG})^\mathrm{T} + \alpha \mathbf{VV}^\mathrm{T}- \alpha \mathbf{VG}(\alpha \mathbf{GG}^\mathrm{T} + \mathbf{I}_C)^{-1}(\mathbf{E}+ \alpha \mathbf{VG})^\mathrm{T}$. The above problem can be transformed to a generalized eigenvalue problem as follows:
\begin{equation}
\label{E13}
(\mathbf{XRX}^\mathrm{T} + \gamma \mathbf{XLX}^\mathrm{T} + \beta \mathbf{I}_m)\mathbf{P} = \mathbf{XHX}^\mathrm{T}\mathbf{P} \mathbf{\Pi}
\end{equation}
where $\mathbf{\Pi} = \mathrm{diag}(\pi_1,\pi_2,...,\pi_d) \in \mathbb{R}^{d \times d}$ is a diagonal matrix with each element as a Lagrange Multiplier. Then the optimal solution is obtained by calculating the eigenvectors of Eq.(\ref{E13}) corresponding to the $d$-smallest eigenvalues.

\begin{algorithm}[h]
	\SetAlgoLined
	\caption{CMMS for UDA}
	\label{alg1}
	\KwIn{Source data $\{\mathbf{X}_s,\mathbf{Y}_s\}$;
		target data $\{\mathbf{X}_t\}$;
		initial target label matrix $\mathbf{G}_t$;
		initial  adjacency matrix $\mathbf{S}$;	
		hyper-parameters $\gamma=5.0$, $\alpha$, $\beta$; subspace dimensionality $d$ = 100;
		neighborhood size $k$ = 10; maximum iteration $T$ = 10. }
	\KwOut{Target label matrix $\mathbf{G}_t$}
	\BlankLine
	$t$ = 0;\\
	\While { \rm{not converge} \textbf{and} $t$ $\leq T$}
	{
		// \textit{Projection matrix $\mathbf{P}$} \\
		Update $\mathbf{P}$ by solving the generalized eigenvalue problem in (\ref{E13});\\
		// \textit{Cluster prototype matrix of target data $\mathbf{F}$} \\
		Update $\mathbf{F}$ by (\ref{E11});\\
		// \textit{Label assignment matrix of target data $\mathbf{G}_t$} \\
		Update each row of $\mathbf{G}_t$ by (\ref{E14});\\
		// \textit{Local adjacency matrix of target data $\mathbf{S}$} \\
		Update each row of $\mathbf{S}$ by solving (\ref{E16});\\
		$t$ = $t$ + 1;
	}
	\textbf{Return} Target label matrix $\mathbf{G}_t$.
\end{algorithm}

\textbf{3. $\mathbf{G}$-subproblem:}
In variable $\mathbf{G}$, only $\mathbf{G}_t$ needs to be updated. With $\mathbf{P}$, $\mathbf{F}$ and $\mathbf{S}$ fixed, the optimization problem with regard to $\mathbf{G}_t$ is equal to minimizing Eq.(\ref{E2}). Like $K$-means clustering, we can solve it by assigning the label of each target sample to its nearest cluster centroid. To this end, we have:
\begin{equation}
\label{E14}
  (\mathbf{G}_t)_{ik} =
  \begin{cases}
    1, &\text{if $k = \mathop{\arg\min}_{j}\|\mathbf{P}^\mathrm{T}\mathbf{x}_{ti} -\mathbf{F}(:,j)\|_2^2$}\\
	0, &\text{otherwise}
  \end{cases}
\end{equation}

\textbf{4. $\mathbf{S}$-subproblem:}
When $\mathbf{G}$, $\mathbf{F}$ and $\mathbf{P}$ are fixed, the optimization problem with regard to $\mathbf{S}$ is equal to minimizing Eq.(\ref{E5}). Actually, we can divide it into $n_t$ independent subproblems with each formulated as:
\begin{equation}
\label{E15}
\min_{\mathbf{S}_{i,:}\mathbf{1}_{n_t} = 1, 0 \leq S_{ij} \leq 1}\sum\limits_{j=1}^{n_t}\|\mathbf{P}^\mathrm{T}\mathbf{x}_{ti}-\mathbf{P}^\mathrm{T}\mathbf{x}_{tj}\|_2^2S_{ij}+\delta{S_{ij}^2}
\end{equation}
where $\mathbf{S}_{i,:}$ is the $i$-th row of $\mathbf{S}$. By defining $A_{ij} = \|\mathbf{P}^\mathrm{T}\mathbf{x}_{ti}-\mathbf{P}^\mathrm{T}\mathbf{x}_{tj}\|_2^2$, the above problem can be written as:
\begin{equation}
\label{E16}
\min_{\mathbf{S}_{i,:}\mathbf{1}_{n_t} = 1,0 \leq S_{ij} \leq 1}\|\mathbf{S}_{i,:} + \frac {\mathbf{A}_{i,:}}{2\delta}\|_2^2
\end{equation}
The corresponding Lagrangian function is:
\begin{equation}
\label{E17}
\min_{\mathbf{S}_{i,:}}\|\mathbf{S}_{i,:} + \frac {\mathbf{A}_{i,:}}{2\delta}\|_2^2 - \mu(\mathbf{S}_{i,:}\mathbf{1}_{n_t} - 1) - \mathbf{S}_{i,:}\bm{\eta}^{\mathrm{T}}
\end{equation}
where $\mu$ and $\bm{\eta}$ are the Lagrangian multipliers. 

To explore the data locality and reduce computation time, we prefer to learn a sparse $\mathbf{S}_{i,:}$, i.e., only the $k$-nearest neighbors of each sample are preserved to be locally connected. Based on the KKT condition, Eq.(\ref{E17}) has a closed-form solution:
\begin{equation}
\label{E18}
S_{ij} = \mathrm{max}(z-\frac{A_{ij}}{2\delta},0), \quad z = \frac{1}{k} + \frac{1}{2k\delta}\sum\limits_{j=1}^{k}\tilde{A}_{ij}
\end{equation}
where $\tilde{A}_{ij}$ is the element of matrix $\tilde{\mathbf{A}}$, obtained by sorting the entries for each row of $\mathbf{A}$ in an ascending order. According to \cite{Nie2014}, we define $B_{ij} = \|\mathbf{x}_{ti}-\mathbf{x}_{tj}\|_2^2$ and set the value of parameter $\delta$ as:
\begin{equation}
\label{E19}
\delta =\frac{1}{n_t}\sum\limits_{i = 1}^{n_t}(\frac{k}{2}\tilde{B}_{i,k+1}-\frac{1}{2}\sum\limits_{j=1}^{k}\tilde{B}_{ij})
\end{equation}
Similar to $\tilde{A}_{ij}$, we also define $\tilde{B}_{ij}$ as the element of matrix $\tilde{\mathbf{B}}$ which is obtained by sorting the entries for each row of $\mathbf{B}$ from small to large.

We use a linear SVM\footnote{\url{https://www.csie.ntu.edu.tw/~cjlin/liblinear/}\label{SVM}} classifier to initialize the target label matrix.  The initial adjacency matrix is obtained by solving each subproblem like (\ref{E16}) in the original space. The detailed optimization steps of CMMS are summarized in Algorithm \ref{alg1}.

\subsection{Convergence and Complexity Analysis \label{Algorithm Analysis}}
\subsubsection{Convergence Analysis}
We can prove the convergence of the proposed Algorithm \ref{alg1} via the following proposition:
\begin{proposition}
The proposed iterative optimization steps shown in algorithm \ref{alg1} monotonically decreases the objective function value of (\ref{E9}) in each iteration.
\end{proposition}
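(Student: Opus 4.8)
The plan is to establish the monotone decrease by a block-coordinate descent argument: since Algorithm~\ref{alg1} updates $\mathbf{P}$, $\mathbf{F}$, $\mathbf{G}$, $\mathbf{S}$ cyclically, each update from its own subproblem, it suffices to show that \emph{each} of the four subproblems is solved by a (global) minimizer of the objective of (\ref{E9}) restricted to the active variable with the others frozen. Because a minimizer can never increase the objective, chaining the four steps gives
\begin{equation*}
J(\mathbf{P}^{(t+1)},\mathbf{F}^{(t+1)},\mathbf{G}^{(t+1)},\mathbf{S}^{(t+1)}) \leq J(\mathbf{P}^{(t)},\mathbf{F}^{(t)},\mathbf{G}^{(t)},\mathbf{S}^{(t)}),
\end{equation*}
where $J$ denotes the objective in (\ref{E9}). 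Since $J$ is a sum of squared Frobenius norms, a trace term $\gamma\,\mathrm{tr}(\mathbf{P}^\mathrm{T}\mathbf{X}\mathbf{L}\mathbf{X}^\mathrm{T}\mathbf{P})$, and $\gamma\delta\|\mathbf{S}\|_F^2$, it is bounded below by $0$ on the feasible set (the trace term is nonnegative because $\mathbf{L}_s$ and $\mathbf{L}_t$ are graph Laplacians, hence positive semidefinite). A nonincreasing sequence bounded below converges, which yields convergence of the objective value.

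The per-step verification proceeds as follows. For the $\mathbf{F}$-subproblem, the reduced objective (\ref{E10}) is an unconstrained convex quadratic in $\mathbf{F}$ (sum of two squared Frobenius norms), so the stationary point (\ref{E11}) obtained by zeroing the derivative is the exact global minimizer; hence this step does not increase $J$. For the $\mathbf{P}$-subproblem, after substituting (\ref{E11}) the problem is exactly the trace-minimization (\ref{E12}) under the constraint $\mathbf{P}^\mathrm{T}\mathbf{X}\mathbf{H}\mathbf{X}^\mathrm{T}\mathbf{P}=\mathbf{I}_d$; by the Ky Fan / Rayleigh--Ritz theorem its global optimum is attained by the $d$ eigenvectors of (\ref{E13}) with smallest eigenvalues, which is precisely what Algorithm~\ref{alg1} computes, so again $J$ does not increase (note that substituting the optimal $\mathbf{F}$ is legitimate: minimizing over $\mathbf{P}$ of $\min_{\mathbf{F}} J$ equals the joint minimum over $(\mathbf{P},\mathbf{F})$ of $J$). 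For the $\mathbf{G}$-subproblem, with the other variables fixed the objective reduces to (\ref{E2}) whose minimization over the discrete constraint set $\mathbf{G}_t\in\{0,1\}^{n_t\times C}$ (one nonzero per row) decouples across rows, and the row-wise nearest-centroid assignment (\ref{E14}) is the exact minimizer of each row problem, so $J$ does not increase. For the $\mathbf{S}$-subproblem, the reduced objective is (\ref{E5}) plus $\gamma\delta\|\mathbf{S}\|_F^2$ which decouples into the $n_t$ independent convex problems (\ref{E15}); each is a Euclidean projection onto the simplex-type set $\{\mathbf{S}_{i,:}\mathbf{1}_{n_t}=1,\,0\le S_{ij}\le 1\}$ as rewritten in (\ref{E16}), and its KKT-based closed form (\ref{E18}) is the exact minimizer, so $J$ does not increase.

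The main subtlety I expect to flag is the $\mathbf{S}$-step in combination with the $k$-nearest-neighbor sparsification: Algorithm~\ref{alg1} does not solve the full problem (\ref{E16}) over all $n_t$ coordinates but restricts $\mathbf{S}_{i,:}$ to its $k$ nearest neighbors before applying (\ref{E18}). To keep the monotonicity argument clean I would either (i) note that the $\delta$ chosen by (\ref{E19}) makes the unconstrained optimum (\ref{E18}) automatically $k$-sparse, so the restricted and unrestricted minimizers coincide, or (ii) interpret the algorithm as minimizing over the fixed sparsity pattern, in which case $J$ still decreases relative to the previous iterate provided the previous $\mathbf{S}$ is feasible under (and the objective is evaluated with) the same constraint set—this requires a brief consistency remark. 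A second minor point worth a sentence is that the eigenvalue problem (\ref{E13}) presupposes $\mathbf{X}\mathbf{H}\mathbf{X}^\mathrm{T}$ is (at least) positive semidefinite so the generalized eigenproblem is well posed; this holds since $\mathbf{H}$ is a projection matrix. Everything else is routine: each subproblem is either a convex quadratic with a closed-form global solution or a discrete problem solved exactly by enumeration per row.
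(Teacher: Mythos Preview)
Your proposal is correct and follows essentially the same block-coordinate descent argument as the paper's proof, though with considerably more care: the paper simply asserts that each of the four subproblems is ``convex'' and solved to optimality, without addressing the discrete $\mathbf{G}$-step, the non-convex constraint in the $\mathbf{P}$-step, or the $k$-NN sparsification in the $\mathbf{S}$-step that you rightly flag. Your added boundedness argument (via positive semidefiniteness of the Laplacians) is also absent from the paper, which only claims monotone decrease and then asserts convergence to a local solution without further justification.
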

\begin{proof}
	Assume that at the $r$-th iteration, we get $\mathbf{P}_r$, $\mathbf{F}_r$, $\mathbf{G}_r$, $\mathbf{S}_r$. We denote the value of the objective function in (\ref{E9}) at the $r$-th iteration as $f(\mathbf{P}_r, \mathbf{F}_r, \mathbf{G}_r, \mathbf{S}_r)$. In our Algorithm \ref{alg1}, we divide problem
	(\ref{E9}) into four subproblems (\ref{E10}), (\ref{E12}), (\ref{E14}) and (\ref{E15}), and each of them is a convex problem with respect to their corresponding variables. By solving the subproblems alternatively, our proposed algorithm can ensure finding the optimal solution of each subproblem, i.e., $\mathbf{P}_{r+1}$, $\mathbf{F}_{r+1}$, $\mathbf{G}_{r+1}$, $\mathbf{S}_{r+1}$. Therefore, as the combination of four subproblems, the objective function value of (\ref{E9}) in the $(r+1)$-th iteration satisfies:
	\begin{equation}
	\label{E26}
	f(\mathbf{P}_r, \mathbf{F}_r, \mathbf{G}_r, \mathbf{S}_r) \ge f(\mathbf{P}_{r+1}, \mathbf{F}_{r+1}, \mathbf{G}_{r+1}, \mathbf{S}_{r+1})
	\end{equation}
	In light of this, the proof is completed and the algorithm will converge to local solution at least.
\end{proof}

\subsubsection{Complexity Analysis}
The optimization Algorithm \ref{alg1} of our CMMS comprises four subproblems. The complexity of these four subproblems are induced as follows: First, the cost of initializing $\mathbf{S}$ is $\mathcal{O}(mn_t^2 + n_t^2\mathrm{log}(n_t))$ and we ignore the time to initialize $\mathbf{G}_t$ since the base classifier is very fast. Then, in each iteration, the complexity to construct and solve the generalized eigenvalue problem (\ref{E12}) for $\mathbf{P}$ is $\mathcal{O}(n^2C + nC^2 + C^3 + mn^2 + nm^2+ dm^2)$. The target cluster centroids $\mathbf{F}$ can be obtained with a time cost of $\mathcal{O}(dmn + dnC + dC^2)$. The complexity of updating the target labels matrix $\mathbf{G}_t$ is $\mathcal{O}(Cdn_t)$. The adjacency matrix $\mathbf{S}$ is updated with the cost of $\mathcal{O}(dn_t^2 + n_t^2\mathrm{log}(n_t))$. Generally, we have $C \ \textless \ d  \ \textless \ m$. Therefore, the overall computational complexity is $\mathcal{O}(n_t^2\mathrm{log}(n_t)(T+1) + n^2mT + nm^2T + dm^2T)$, where $T$ is the number of iteration.

\section{Semi-supervised Extension \label{SDA extension}}
In this section, we further extend our CMMS to semi-supervised domain adaptation including both homogeneous and heterogeneous settings.
\subsubsection{Homogeneous Setting}
We denote the target data as $\mathbf{X}_t = \{\mathbf{X}_l, \mathbf{X}_u\}$ where $\mathbf{X}_l = \{\mathbf{x}_{li}\}_{i=1}^{n_l}$ is the labeled data with the corresponding labels denoted by $\mathbf{Y}_l = \{\mathbf{y}_{li}\}_{i=1}^{n_l}$ and $\mathbf{X}_u = \{\mathbf{x}_{uj}\}_{j=1}^{n_u}$ is the unlabeled data. In the SDA scenario, except for the class centroids of source data,  the few but precise labeled target data can provide additional valuable reference for determining the cluster centroids $\mathbf{F}$ of unlabeled data. In this paper, we provide a simple but effective strategy to adaptively combine these two kinds of information. Specifically, our proposed semi-supervised extension is formulated as:
\begin{equation}
\label{E21}
\begin{split}
&\min_{\mathbf{P}, \mathbf{F}, \mathbf{G}_u, \mathbf{S}} \|\lambda_1\mathbf{P}^\mathrm{T}\mathbf{X}_s\mathbf{E}_s+\lambda_2\mathbf{P}^\mathrm{T}\mathbf{X}_l\mathbf{E}_l-\mathbf{F}\|_F^2+\beta \|\mathbf{P}\|_F^2\\
&+\alpha\|\mathbf{P}^\mathrm{T}\mathbf{X}_t-\mathbf{F}\mathbf{G}_t^\mathrm{T}\|_F^2+\gamma(\mathrm{tr}(\mathbf{P}^\mathrm{T}\mathbf{X}\mathbf{L}\mathbf{X}^\mathrm{T}\mathbf{P})+\delta\|\mathbf{S}\|_F^2)\\
& \quad s.t.\ \lambda_1 + \lambda_2 = 1, \lambda_1,\lambda_2 \ge 0, \ \mathbf{P}^\mathrm{T}\mathbf{XHX}^\mathrm{T}\mathbf{P} = \mathbf{I}_d, \\
& \quad \ \quad \mathbf{G}_u {\in \{0,1\}^{n_{t} \times C}},  \mathbf{S}\mathbf{1}_{n_t} = \mathbf{1}_{n_t}, 0 \leq S_{ij} \leq 1
\end{split}
\end{equation}
where $n_t = n_l + n_u$, $\lambda_1$, $\lambda_2$ are balanced factors and $\mathbf{G}_t = [\mathbf{G}_l;\mathbf{G}_u]$. $\mathbf{G}_l$ has the same definition with $\mathbf{G}_s$. Eq.(\ref{E21}) can be transformed to the standard formula as Eq.(\ref{E9}): 
\begin{equation}
\label{E22}
\begin{split}
&\min_{\mathbf{P}, \mathbf{F}, \mathbf{G}, \mathbf{S}}\|\mathbf{P}^\mathrm{T}\mathbf{X}\mathbf{E}-\mathbf{F}\|_F^2 + \alpha\|\mathbf{P}^\mathrm{T}\mathbf{X}\mathbf{V}-\mathbf{F}\mathbf{G}^\mathrm{T}\|_F^2\\
&\qquad \quad +\beta \|\mathbf{P}\|_F^2+\gamma (\mathrm{tr}(\mathbf{P}^\mathrm{T}\mathbf{XLX}^\mathrm{T}\mathbf{P})+\delta\|\mathbf{S}\|_F^2)\\
&s.t.\ \lambda_1 + \lambda_2 = 1, \lambda_1,\lambda_2 \ge 0, \ \mathbf{P}^\mathrm{T}\mathbf{XHX}^\mathrm{T}\mathbf{P} = \mathbf{I}_d, \\
&\ \ \ \ \ \ \mathbf{G} {\in \{0,1\}^{n \times C}}, \  \mathbf{S}\mathbf{1}_{n_t} = \mathbf{1}_{n_t}, 0 \leq S_{ij} \leq 1
\end{split}
\end{equation}
where $ \mathbf{E} = [\lambda_1\mathbf{E}_s; \lambda_2\mathbf{E}_l; \mathbf{0}_{n_u \times C}]$, $\mathbf{G} = [\mathbf{0}_{n_s \times C}; \mathbf{G}_t]$. In addition to the balanced factors $\lambda_1$ and $\lambda_2$, the other variables in Eq.(\ref{E22}) can be readily solved with our Algorithm \ref{alg1}. Since the objective function is convex with respect to $\lambda_1$ and $\lambda_2$, they can be solved easily with the closed-form solution: $\lambda_1 = \mathrm{max}(\mathrm{min}(\mathrm{tr}(\mathbf{J}\mathbf{M}^\mathrm{T})/\mathrm{tr}(\mathbf{J}^\mathrm{T}\mathbf{J}), 1), 0)$, $\lambda_2 = 1 - \lambda_1$, where $\mathbf{J} = \mathbf{P}^\mathrm{T}\mathbf{X}_s\mathbf{E}_s- \mathbf{P}^\mathrm{T}\mathbf{X}_l\mathbf{E}_l$, $\mathbf{M} =  \mathbf{F} - \mathbf{P}^\mathrm{T}\mathbf{X}_l\mathbf{E}_l.$ 
\subsubsection{Heterogeneous Setting}
In the heterogeneous setting, the source and target data usually own different feature dimensions. Our proposed Eq.(\ref{E21}) can be naturally extended to the heterogeneous manner, only by replacing the projection matrix $\mathbf{P}$ with two separate ones \cite{Hsieh2016}:
\begin{equation}
\label{E23}
\begin{aligned}
&\min_{\mathbf{P}, \mathbf{F}, \mathbf{G}_u, \mathbf{S}} \|\lambda_1\mathbf{P}_s^\mathrm{T}\mathbf{X}_s\mathbf{E}_s+\lambda_2\mathbf{P}_t^\mathrm{T}\mathbf{X}_l\mathbf{E}_l-\mathbf{F}\|_F^2 \\
&+\gamma(2\mathrm{tr}(\mathbf{P}_s^\mathrm{T}\mathbf{X}_s\mathbf{L_s}\mathbf{X_s}^\mathrm{T}\mathbf{P}_s)+2\mathrm{tr}(\mathbf{P}_t^\mathrm{T}\mathbf{X}_t\mathbf{L}_t\mathbf{X}_t^\mathrm{T}\mathbf{P}_t)+\\ 
&\delta\|\mathbf{S}\|_F^2)+\alpha\|\mathbf{P}_t^\mathrm{T}\mathbf{X}_t-\mathbf{F}\mathbf{G}_t^\mathrm{T}\|_F^2+\beta(\|\mathbf{P}_s\|_F^2+\|\mathbf{P}_t\|_F^2) \\
&s.t.\ \lambda_1 + \lambda_2 = 1, \lambda_1,\lambda_2 \ge 0, \ \mathbf{Z}\mathbf{H}\mathbf{Z}^\mathrm{T} = \mathbf{I}_d, \\
  & \qquad \mathbf{G}_t  {\in \{0,1\}^{n_{t} \times C}},  \mathbf{S}\mathbf{1}_{n_t} = \mathbf{1}_{n_t}, 0 \leq S_{ij} \leq 1
\end{aligned}
\end{equation}
where $\mathbf{Z} = [\mathbf{P}_s^\mathrm{T}\mathbf{X}_s,\mathbf{P}_t^\mathrm{T}\mathbf{X}_t]$ is the new feature representations of two domains with the same dimension. $\alpha$ controls the effect of $K$-means clustering for target data and a proper value helps to achieve a good clustering performance. $\gamma$ determines the influence of discriminative information preservation for source data and local manifold self-learning for target data, which is crucial to the equality improvement of class centroids for two domains. By defining $\mathbf{X} = \mathrm{diag}(\mathbf{X}_s,\mathbf{X}_t)$, $\mathbf{P} = [\mathbf{P}_s;\mathbf{P}_t]$, Eq.(\ref{E23}) can be transformed to the standard formula as Eq.(\ref{E22}), and thus can be solved with the same algorithm.

\section{Experiments \label{Experiments}}

In this section, we first describe all involved datasets. Next, the details of experimental setup including comparison methods in UDA and SDA scenarios, training protocol and parameter setting are given. Then, the experimental results in UDA scenario, ablation study, parameter sensitivity and convergence analysis are presented. Finally, we show the results in SDA scenario. The source code of this paper is available at \url{https://github.com/LeiTian-qj/CMMS/tree/master}.
\subsection{Datasets and Descriptions}
We apply our method to seven benchmark datasets which are widely used in domain adaptation. These datasets are represented with different kinds of features including Alexnet-FC$_7$, SURF, DeCAF$_6$, pixel, Resnet50 and BoW. Table \ref{datasets} shows the overall descriptions of these datasets. We will introduce them in detail as follows.

\emph{Office31} \cite{Saenko2010} contains 4,110 images of office objects in 31 categories from three domains: Amazon (A), DSLR (D) and Webcam (W). Amazon images are downloaded from the online merchants. The images from DSLR domain are captured by a digital SLR camera while those from Webcam domain by a web camera. We adopt the AlexNet-FC$_7$ features\footnote{ \url{https://github.com/VisionLearningGroup/CORAL/tree/master/dataset}} fine-tuned on source domain. Following \cite{Liang2018}, we have 6 cross-domain tasks, i.e., ``A$\rightarrow$D", ``A$\rightarrow$W", ..., ``W$\rightarrow$D".

\emph{Office-Caltech10} \cite{Gong2012} includes 2,533 images of objects in 10 shared classes between Office31 dataset and the Caltech256 (C) dataset. The Caltech256 dataset is a widely used benchmark for object recognition. We use the 800-dim SURF features\footnote{\url{http://boqinggong.info/assets/GFK.zip}} and 4,096-dim DeCAF$_6$ features\footnote{ \url{https://github.com/jindongwang/transferlearning/blob/master/data/} \label{JDW}} \cite{Donahue2014}. Following \cite{Long2013-JDA}, we construct 12 cross-domain tasks, i.e., ``A$\rightarrow$C", ``A$\rightarrow$D", ..., ``W$\rightarrow$D".

\emph{MSRC-VOC2007} \cite{Long2014-TJM} consists of two subsets: MSRC (M) and VOC2007 (V). It is constructed by selecting 1,269 images in MSRC and 1,530 images in VOC2007 which share 6 semantic categories: aeroplane, bicycle, bird, car, cow, sheep. We utilize the 256-dim pixel features\footnote{ \url{http://ise.thss.tsinghua.edu.cn/~mlong/}}. Finally, we establish 2 tasks, ``M$\rightarrow$V" and ``V$\rightarrow$M".

\emph{MNIST-USPS} is made up of two handwritten digit image datasets: MNIST (Mn) and USPS (Us). Following \cite{TPN}, we sample  2,000 images from MNIST and 1,800 images from USPS. Besides, we utilize 2 conv-layer LeNet pre-trained on labeled source data to extract CNN features. Two tasks are obtained, i.e., ``Mn$\rightarrow$Us" and ``Us$\rightarrow$Mn".

\emph{Office-Home} \cite{Venkateswara2017} involves 15,585 images of daily objects in 65 shared classes from four domains: Art (artistic depictions of objects, Ar), Clipart (collection of clipart images, Cl), Product (images of objects without background, Pr) and Real-World (images captured with a regular camera, Re). We use the 4,096-dim Resnet50 features\footnote{ \url{https://github.com/hellowangqian/domainadaptation-capls}} released by \cite{Wang2019-SPL}. Similarly, we obtain 12 tasks, i.e., ``Ar$\rightarrow$Cl", ``Ar$\rightarrow$Pr", ..., ``Re$\rightarrow$Pr".

\begin{table}[h]
	\caption{Statistics of the seven benchmark datasets}
	\label{datasets}
	\centering
	\scalebox{0.7}{%
		\begin{tabular}{lcccc}
			\toprule
			Dataset & Subsets (Abbr.) & Samples & Feature (Size) & Classes \\ \midrule
			\multirow{3}{*}{Office31} & Amazon (A) & 2,817 & \multirow{3}{*}{Alexnet-FC$_6$ (4,096)} & \multirow{3}{*}{31} \\
			& DSLR (D) & 498 &  &  \\
			& Webcam (W) & 795 &  &  \\ \hline
			\multirow{4}{*}{Office-Caltech10} & Amazon (A) & 958 & \multirow{4}{*}{\begin{tabular}[c]{@{}c@{}}SURF (800)\\ DeCAF$_6$ (4,096)\end{tabular}} & \multirow{4}{*}{10} \\
			& Caltech (W) & 1,123 &  &  \\
			& DSLR (D) & 157 &  &  \\
			& Webcam (W) & 295 &  &  \\ \hline
			\multirow{2}{*}{MSRC-VOC2007} & MSRC (M) & 1,269 & \multirow{2}{*}{Pixel (256)} & \multirow{2}{*}{6} \\
			& VOC2007 (V) & 1,530 &  &  \\ \hline
			\multirow{2}{*}{MNIST-USPS} & {MNIST (Mn)} & {2,000} & \multirow{2}{*}{{CNN features (2048)}} & \multirow{2}{*}{{10}} \\
			& {USPS (Us)} & 1,800 &  &  \\ \hline
			\multirow{4}{*}{Office-Home} & Art (Ar) & 2,427 & \multirow{4}{*}{Resnet50 (2,048)} & \multirow{4}{*}{65} \\
			& Clipart (Cl) & 4,365 &  &  \\
			& Product (Pr) & 4,439 &  &  \\
			& RealWorld (Re) & 4,357 &  &  \\ \hline
			\multirow{2}{*}{Visda2017} & train & \multicolumn{1}{l}{{152,397}} & \multirow{2}{*}{{Resnet50 (2,048)}} & \multirow{2}{*}{{12}} \\
			& {validation} & {55,388} &  &  \\ \hline
			\multirow{5}{*}{\begin{tabular}[c]{@{}l@{}}Multilingual Reuters\\ Collection\end{tabular}} & English & 18,758 & BoW (1,131) & \multirow{5}{*}{6} \\
			& French & 26,648 & BoW (1,230) &  \\
			& German & 29,953 & BoW (1,417) &  \\
			& Italian & 24,039 & BoW (1,041) &  \\
			& Spanish & 12,342 & BoW (807) &  \\ \hline
		\end{tabular}%
	}
\end{table}

\emph{Visda2017} \cite{Visda} is a large-scale dataset for synthetic images to real images cross-domain task, which is first presented in 2017 Visual Domain Adaptation Challenge. Following \cite{TPN}, we take the the training data (152,397 synthetic images) as source domain, and the the validation data (55,388 real images) as target domain. The source and target domains share 12 object categories, i.e., aeroplane, bicycle, bus, car, horse, knife, motorcycle, person, plant, skatebord, train and trunk. We employ Resnet50 model pre-trained on ImageNet in Pytorch to extracted features.

\emph{Multilingual Reuters Collection} \cite{Amini2009} is a cross-lingual text dataset with about 11,000 articles from six common classes in five languages: English, French, German, Italian, and Spanish. All articles are sampled by BoW features\footnote{ \url{http://archive.ics.uci.edu/ml/datasets/Reuters+RCV1+RCV2+Multilingual,+Multiview+Text+Categorization+Test+collection}} with TF-IDF. Then, they are processed by PCA for dimension reduction and the reduced dimensionality for English, French, German, Italian and Spanish are 1,131, 1,230, 1,417, 1,041 and 807, respectively. We pick the Spanish as the target and each of the rest as the source by turns. Eventually, we gain four tasks.

\subsection{Experimental Setup}
\subsubsection{Comparison Methods in UDA Scenario}
1-NN, SVM\textsuperscript{\ref{SVM}},
GFK \cite{Gong2012},
JDA \cite{Long2013-JDA},
CORAL \cite{Sun2016},
DICD \cite{Li2018}, 
JGSA \cite{Zhang2017},
DICE \cite{Liang2018},
MEDA \cite{Wang2018-MEDA},
SPL \cite{Wang2019-SPL},
MCS \cite{Liang2019},
EasyTL \cite{EasyTL},
CAPLS \cite{CAPLS},
TADA \cite{TADA},
ADDA \cite{ADDA},
the method of \cite{IADDA},
TPN \cite{TPN},
CADA-P \cite{CADA-P},
DCAN \cite{DCAN},
DREMA \cite{DRMEA},
GVB-GD \cite{GVB-GD},
RSDA-DANN \cite{RSDA-DANN},
DSAN \cite{DSAN},
SimNet \cite{SimNet},
the method of \cite{Wu et.al},
3CATN \cite{3CATN},
DTA \cite{DTA},
DM-ADA \cite{DM-ADA},
ALDA \cite{ALDA}.

\subsubsection{Comparison Methods in SDA Scenario}
SVM$_t$, SVM$_{st}$\textsuperscript{\ref{SVM}}, 
MMDT \cite{Hoffman2014},
DTMKL \cite{Duan2012},
CDLS \cite{Tsai2016},
ILS \cite{Herath2017},
TFMKL-S and TFMKL-H \cite{Wang2019},
SHFA \cite{Li-2013},
The method of Li \emph{et al}. \cite{Li2018-PA}.
SVM$_t$, SVM$_{st}$, MMDT, DTMKL-f, TFMKL-S and TFMKL-H are employed in the homogeneous setting while SVM$_t$, MMDT, SHFA, CDLS, the method of \cite{Li2018-PA} in the heterogeneous setting and CDSPP \cite{CDSPP}.

\begin{table*}[]
	\centering
	\caption{Classification accuracies (\%) on  Office31 dataset ($\alpha = 0.1, \beta = 0.1$)}
	\label{office31}
	\scalebox{0.75}{%
		\begin{tabular}{ccccccccccccccc}
			\toprule
			Task & 1-NN & SVM & GFK & JDA & CORAL & DICD & JGSA & DICE & MEDA & EasyTL & CAPLS & SPL & MCS & CMMS \\ \midrule
			A$\rightarrow$D & 59.8 & 58.8 & 61.8 & 65.5 & 65.7 & 66.5 & 69.5 & 67.5 & 69.5 & 65.3 & 69.9 & 69.1 & 71.9 & \textbf{72.9} \\
			A$\rightarrow$W & 56.4 & 58.9 & 58.9 & 70.6 & 64.3 & 73.3 & 70.4 & 71.9 & 69.9 & 66.0 & 68.2 & 69.9 & \textbf{75.1} & 74.7 \\
			D$\rightarrow$A & 38.1 & 48.8 & 45.7 & 53.7 & 48.5 & 56.3 & 56.6 & 57.8 & 58.0 & 50.5 & 56.5 & \textbf{62.5} & 58.8 & 60.7 \\
			D$\rightarrow$W & 94.7 & 95.7 & 96.4 & 98.2 & 96.1 & 96.9 & \textbf{98.2} & 97.2 & 94.0 & 93.3 & 98.0 & 97.7 & 96.7 & 97.6 \\
			W$\rightarrow$A & 39.8 & 47.0 & 45.5 & 52.1 & 48.2 & 55.9 & 54.2 & 60.0 & 56.0 & 50.6 & 58.6 & 58.2 & 57.2 & \textbf{60.3} \\
			W$\rightarrow$D & 98.4 & 98.2 & 99.6 & 99.2 & 99.8 & 99.4 & 99.2 & \textbf{100.0} & 96.8 & 97.2 & \textbf{100.0} & 99.6 & 99.4 & 99.6 \\ \midrule
			Average & 64.5 & 67.9 & 68.0 & 73.4 & 70.4 & 74.7 & 74.7 & 75.7 & 74.0 & 70.5 &75.2 & 76.2 & 76.5 & \textbf{77.6} \\ \bottomrule
		\end{tabular}%
	}
\end{table*}

\begin{table*}[]
	\centering
	\caption{Classification accuracies (\%) on Office-Caltech10 dataset with SURF features ($\alpha = 0.1, \beta = 0.2$)}
	\label{office_surf}
	\scalebox{0.75}{%
		\begin{tabular}{ccccccccccccccc}
			\toprule
			Task            & 1-NN & SVM  & GFK  & JDA  & CORAL         & DICD & JGSA          & DICE          & MEDA          & EasyTL       & CAPLS & SPL           & MCS  & CMMS          \\ \midrule
			A$\rightarrow$C & 26.0 & 35.6 & 41.0 & 39.4 & \textbf{45.1} & 42.4 & 41.5          & 42.7          & 43.9          & 42.3          & 43.0  & 41.2          & 44.1 & 39.4          \\
			A$\rightarrow$D & 25.5 & 36.3 & 40.7 & 39.5 & 39.5          & 38.9 & 47.1          & 49.7          & 45.9          & 48.4          & 40.8  & 44.6          & \textbf{55.4} & 53.5          \\
			A$\rightarrow$W & 29.8 & 31.9 & 41.4 & 38.0 & 44.4          & 45.1 & 45.8          & 52.2          & 53.2          & 43.1         & 42.0  & \textbf{58.0} & 40.3 & 56.3          \\
			C$\rightarrow$A & 23.7 & 42.9 & 40.2 & 44.8 & 54.3          & 47.3 & 51.5          & 50.2          & 56.5          & 52.6          & 53.1  & 53.3          & 53.9 & \textbf{61.0} \\
			C$\rightarrow$D & 25.5 & 33.8 & 40.0 & 45.2 & 36.3          & 49.7 & 45.9          & 51.0          & 50.3          & 50.6 & 41.4  & 41.4          & 46.5 & \textbf{51.0}          \\
			C$\rightarrow$W & 25.8 & 34.6 & 36.3 & 41.7 & 38.6          & 46.4 & 45.4          & 48.1          & 53.9          & 53.9         & 43.7  & \textbf{61.7} & 54.2 & \textbf{61.7} \\
			D$\rightarrow$A & 28.5 & 34.3 & 30.7 & 33.1 & 37.7          & 34.5 & 38.0          & 41.1          & 41.2          & 38.3         & 34.0  & 35.3          & 38.3 & \textbf{46.7} \\
			D$\rightarrow$C & 26.3 & 32.1 & 31.8 & 31.5 & 33.8          & 34.6 & 29.9          & 33.7          & 34.9          & \textbf{36.1}          & 27.1  & 25.9          & 31.6 & 31.9          \\
			D$\rightarrow$W & 63.4 & 78.0 & 87.9 & 89.5 & 84.7          & 91.2 & \textbf{91.9} & 84.1          & 87.5          & 86.1          & 82.7  & 82.7          & 85.4 & 86.1          \\
			W$\rightarrow$A & 23.0 & 37.5 & 30.1 & 32.8 & 35.9          & 34.1 & 39.9          & 37.5          & \textbf{42.7} & 38.2         & 36.3  & 41.1          & 37.3 & 40.1          \\
			W$\rightarrow$C & 19.9 & 33.9 & 32.0 & 31.2 & 33.7          & 33.6 & 33.2          & \textbf{37.8} & 34.0          & 35.4         & 33.1  & 37.8          & 33.8 & 35.8          \\
			W$\rightarrow$D & 59.2 & 80.9 & 84.4 & 89.2 & 86.6          & 89.8 & \textbf{90.5} & 87.3          & 88.5          & 79.6          & 80.3  & 83.4          & 80.9 & 89.2          \\ \midrule
			Average         & 31.4 & 42.6 & 44.7 & 46.3 & 47.6          & 49.0 & 50.0          & 51.3          & 52.7          & 50.5         & 46.5  & 50.5          & 50.1 & \textbf{54.4} \\ \bottomrule
		\end{tabular}%
	}
\end{table*}

\subsubsection{Training Protocol} For UDA scenario, all source samples are utilized for training like \cite{Li2018}. We exploit $z$-score standardization \cite{Gong2012} on all kinds of features. For SDA scenario, in homogeneous setting, we use the Office-Caltech10 and MSRC-VOC2007 datasets following the same protocol with \cite{Wang2019}. Specifically, for the Office-Caltech10 dataset, we randomly choose 20 samples per category for amazon domain while 8 for the others as the sources. Three labeled target samples per class are selected for training while the rest for testing. For fairness, we use the train/test splits released by \cite{Hoffman2014}. For the MSRC-VOC2007 dataset, all source samples are utilized for training, and 2 or 4 labeled target samples per category are randomly selected for training with the remaining to be recognized. In heterogeneous setting, we employ the Office-Caltech10 and Multilingual Reuters Collection datasets using the experiment setting of \cite{Li2018-PA}. For the Office-Caltech10 dataset, the SURF and DeCAF$_6$ features are served as the source and target. The source domain contains 20 instances per class, and 3 labeled target instances per category are selected for training with the rest for testing. For the Multilingual Reuters Collection dataset, Spanish is selected as the target and the remaining as the source by turns. 100 articles per category are randomly selected to build the source domain, and 10 labeled target articles per category are selected for training with 500 articles per class from the rest to be classified.

\begin{table*}[]
	\centering
	\caption{Classification accuracies (\%) on Office-Caltech10 dataset  with DeCAF$_6$ features ($\alpha = 0.2, \beta = 0.5$)}
	\label{office_decaf}
	\scalebox{0.75}{%
		\begin{tabular}{ccccccccccccccc}
			\toprule
			Task & 1-NN & SVM & GFK & JDA & CORAL & DICD & JGSA & DICE & MEDA & EasyTL & CAPLS & SPL & MCS & CMMS \\ \midrule
			A$\rightarrow$C & 71.7 & 84.4 & 77.3 & 83.2 & 83.2 & 86.0 & 84.9 & 85.9 & 87.4 & 86.5 & 86.1 & 87.4 & 88.3 & \textbf{88.8} \\
			A$\rightarrow$D & 73.9 & 83.4 & 84.7 & 86.6 & 84.1 & 83.4 & 88.5 & 89.8 & 88.1 & 91.7 & 94.9 & 89.2 & 91.7 & \textbf{95.5} \\
			A$\rightarrow$W & 68.1 & 76.9 & 81.0 & 80.3 & 74.6 & 81.4 & 81.0 & 86.4 & 88.1 & 85.8 & 87.1 & \textbf{95.3} & 91.5 & 92.2 \\
			C$\rightarrow$A & 87.3 & 91.3 & 88.5 & 88.7 & 92.0 & 91.0 & 91.4 & 92.3 & 93.4 & 93.0 & 90.8 & 92.7 & 93.5 & \textbf{94.1} \\
			C$\rightarrow$D & 79.6 & 85.4 & 86.0 & 91.1 & 84.7 & 93.6 & 93.6 & 93.6 & 91.1 & 89.2 & 95.5 & \textbf{98.7} & 90.4 & 95.5 \\
			C$\rightarrow$W & 72.5 & 77.3 & 80.3 & 87.8 & 80.0 & 92.2 & 86.8 & 93.6 & \textbf{95.6} & 82.7 & 85.4 & 93.2 & 85.1 & 91.9 \\
			D$\rightarrow$A & 49.9 & 86.5 & 85.8 & 91.8 & 85.5 & 92.2 & 92.0 & 92.5 & 93.2 & 91.3 & 93.0 & 92.9 & \textbf{93.5} & 93.4 \\
			D$\rightarrow$C & 42.0 & 77.1 & 76.0 & 85.5 & 76.8 & 86.1 & 86.2 & 87.4 & 87.5 & 84.5 & 88.8 & 88.6 & 89.0 & \textbf{89.3} \\
			D$\rightarrow$W & 91.5 & 99.3 & 97.3 & 99.3 & 99.3 & 99.0 & 99.7 & 90.0 & 97.6 & 98.0 & \textbf{100.0} & 98.6 & 99.3 & 99.3 \\
			W$\rightarrow$A & 62.5 & 80.7 & 81.8 & 90.2 & 81.2 & 89.7 & 90.7 & 90.7 & \textbf{99.4} & 89.8 & 92.3 & 92.0 & 93.4 & 93.8 \\
			W$\rightarrow$C & 55.3 & 72.5 & 73.9 & 84.2 & 75.5 & 84.0 & 85.0 & 85.3 & \textbf{93.2} & 80.8 & 88.2 & 87.0 & 88.3 & 89.0 \\
			W$\rightarrow$D & 98.1 & 99.4 & \textbf{100.0} & \textbf{100.0} & \textbf{100.0} & \textbf{100.0} & \textbf{100.0} & \textbf{100.0} & 99.4 & 99.4 & \textbf{100.0} & \textbf{100.0} & 100.0 & \textbf{100.0} \\ \midrule
			Average & 71.0 & 84.5 & 83.2 & 89.1 & 84.7 & 89.9 & 90.0 & 91.4 & 92.8 & 89.4 & 91.8 & 93.0 & 92.0 & \textbf{93.6} \\ \bottomrule
		\end{tabular}%
	}
\end{table*}

\begin{table*}[]
	\centering
	\caption{Classification accuracies (\%) on MSRC-VOC2007 dataset (top, $\alpha = 0.1, \beta = 0.05$) and MNIST-USPS dataset (below, $\alpha = 0.2, \beta = 0.05$)}
	\label{msrc}
	\scalebox{0.75}{%
		\begin{tabular}{p{0.7cm}<{\centering}p{0.7cm}<{\centering}p{0.7cm}<{\centering}p{0.7cm}<{\centering}p{0.7cm}<{\centering}p{0.7cm}<{\centering}p{0.7cm}<{\centering}p{0.7cm}<{\centering}p{0.7cm}<{\centering}p{0.7cm}<{\centering}p{0.7cm}<{\centering}p{0.7cm}<{\centering}p{0.7cm}<{\centering}p{1.45cm}<{\centering}p{0.7cm}<{\centering}}
			\toprule
			Task & 1-NN & SVM & GFK & JDA & CORAL & DICD & JGSA & DICE & MEDA & EasyTL & CAPLS & SPL & MCS & CMMS \\ \midrule
			M$\rightarrow$V & 35.5 & 35.6 & 34.7 & 30.4 & \textbf{38.4} & 32.4 & 35.2 & 33.1 & 35.3 & 33.1 & 32.5 & 34.7 & 31.8 & 31.8 \\
			V$\rightarrow$M & 47.2 & 51.8 & 48.9 & 44.8 & 54.9 & 47.8 & 47.5 & 46.3 & 60.1 & 58.3 & 49.2 & 63.8 & 66.5 & \textbf{79.1} \\ \midrule
			Average & 41.3 & 43.7 & 41.8 & 37.6 & 46.7 & 40.1 & 41.3 & 39.7 & 47.7 & 45.7 & 40.9 & 49.3 & 49.2 & \textbf{55.4} \\ \bottomrule \toprule
			Task & 1-NN & SVM & JDA & DICD & JGSA & DICE & MEDA & EasyTL & SPL & MCS & ADDA & TPN & Chadha  \cite{IADDA} & CMMS \\ \midrule
			Mn$\rightarrow$Us & 90.5 & 88.6 & 93.2 & 94.4 & 93.4 & 94.9 & 94.7 & 90.3 & \textbf{95.4} & 94.9 & 89.4 & 92.1 & 92.5 & \textbf{95.4} \\
			Us$\rightarrow$Mn & 88.8 & 86.6 & 92.0 & 92.2 & 91.3 & 92.0 & 93.6 & 90.5 & 94.1 & 93.8 & 90.1 & 94.1 & \textbf{96.7} & 94.3 \\ \midrule
			Average & 89.7 & 87.6 & 92.6 & 93.3 & 92.3 & 93.4 & 94.2 & 90.4 & 94.7 & 94.3 & 89.8 & 93.1 & 94.6 & \textbf{94.8} \\ \bottomrule
		\end{tabular}%
	}
\end{table*}

\begin{table*}[]
	\centering
	\caption{Classification accuracies (\%) on Office-Home dataset ($\alpha = 0.1, \beta = 0.01$). Deep learning methods are listed below CMMS. The highest accuracy of each cross-domain task with traditional/deep learning methods is boldfaced.}
	\label{officehome}
	\scalebox{0.73}{%
		\begin{tabular}{p{1.7cm}<{\centering}p{0.6cm}<{\centering}p{0.6cm}<{\centering}p{0.6cm}<{\centering}p{0.6cm}<{\centering}p{0.6cm}<{\centering}p{0.6cm}<{\centering}p{0.6cm}<{\centering}p{0.6cm}<{\centering}p{0.6cm}<{\centering}p{0.6cm}<{\centering}p{0.6cm}<{\centering}p{0.6cm}<{\centering}p{0.6cm}<{\centering}p{0.6cm}<{\centering}}
			\toprule
			Methods & \multicolumn{1}{l}{Ar$\rightarrow$Cl} & \multicolumn{1}{l}{Ar$\rightarrow$Pr} & \multicolumn{1}{l}{Ar$\rightarrow$Re} & \multicolumn{1}{l}{Cl$\rightarrow$Ar} & \multicolumn{1}{l}{Cl$\rightarrow$Pr} & \multicolumn{1}{l}{Cl$\rightarrow$Re} & \multicolumn{1}{l}{Pr$\rightarrow$Ar} & \multicolumn{1}{l}{Pr$\rightarrow$Cl} & \multicolumn{1}{l}{Pr$\rightarrow$Re} & \multicolumn{1}{l}{Re$\rightarrow$Ar} & \multicolumn{1}{l}{Re$\rightarrow$Cl} & \multicolumn{1}{l}{Re$\rightarrow$Pr} & \multicolumn{1}{l}{Average} \\ \midrule
			1-NN & 37.9 & 54.4 & 61.6 & 40.7 & 52.7 & 52.5 & 47.1 & 41.1 & 66.7 & 57.1 & 45.1 & 72.9 & 52.5 \\
			SVM & 42.4 & 61.2 & 69.9 & 42.6 & 56.2 & 57.7 & 48.7 & 41.5 & 70.6 & 61.6 & 45.7 & 76.1 & 56.2 \\
			GFK & 38.7 & 57.7 & 63.0 & 43.3 & 54.6 & 54.2 & 48.0 & 41.6 & 66.8 & 58.1 & 45.0 & 72.8 & 53.6 \\
			JDA & 45.8 & 63.6 & 67.5 & 53.3 & 62.2 & 62.9 & 56.0 & 47.1 & 72.9 & 61.8 & 50.5 & 75.2 & 59.9 \\
			CORAL & 47.3 & 69.3 & 74.6 & 54.2 & 67.2 & 67.8 & 55.7 & 43.0 & 73.9 & 64.2 & 49.2 & 78.0 & 62.0 \\
			DICD & 53.0 & 73.6 & 75.7 & 59.7 & 70.3 & 70.6 & 60.9 & 49.4 & 77.7 & 67.9 & 56.2 & 79.7 & 66.2 \\
			JGSA & 51.3 & 72.9 & 78.5 & 58.1 & 72.4 & 73.4 & 62.3 & 50.3 & 79.4 & 67.9 & 53.4 & 80.4 & 63.3 \\
			DICE & 49.1 & 70.7 & 73.9 & 51.4 & 65.9 & 65.9 & 60.0 & 48.6 & 76.2 & 65.4 & 53.5 & 78.8 & 63.3 \\
			MEDA & 52.1 & 75.3 & 77.6 & 61.0 & 76.5 & 76.8 & 61.8 & 53.4 & 79.5 & 68.1 & 55.1 & 82.5 & 68.3 \\
			EasyTL & 49.8 & 72.5 & 75.8 & 60.7 & 69.5 & 71.2 & 59.0 & 47.1 & 76.4 & 64.8 & 51.1 & 77.3 & 64.6 \\
			CAPLS & \textbf{56.2} & 78.3 & 80.2 & \textbf{66.0} & 75.4& 78.4 & 66.4 & 53.2 & 81.1 & \textbf{71.6} & 56.1 & 84.3 & 70.6 \\
			SPL & 54.5 & 77.8 & 81.9 & 65.1 & 78.0 & 81.1 & 66.0 & 53.1 & 82.8 & 69.9 & 55.3 & \textbf{86.0} & 71.0 \\
			MCS & 53.8 & 78.4 & 78.8 & 64.0 & 75.0 & 78.9 & 64.8 & 52.3 & 79.9 & 67.0 & 55.8 & 80.3 & 69.1 \\
			CMMS & \textbf{56.2} & \textbf{80.8} & \textbf{82.8} & 65.9 & \textbf{78.7} & \textbf{82.2} & \textbf{67.7} & \textbf{54.5} & \textbf{82.9} & 69.5 & \textbf{57.1} & 85.2 & \textbf{72.0} \\ \midrule
			TADA & 53.1 & 72.3 & 77.2 & 59.1 & 71.2 & 72.1 & 59.7 & 53.1 & 78.4 & 72.4 & 60.0 & 82.9 & 67.6 \\
			CADA-P & 56.9 & 76.4 & 80.7 & 61.3 & \textbf{75.2} & 75.2 & 63.2 & 54.5 & 80.7 & 73.9 & \textbf{61.5} & 84.1 & 70.2 \\
		    DCAN & 54.5 & 75.7 & \textbf{81.2} & \textbf{67.4} & 74.0 & 76.3 & \textbf{67.4} & 52.7 & 80.6 & 74.1 & 59.1 & 83.5 & \textbf{70.5} \\
			DRMEA & 52.3 & 73.0 & 77.3 & 64.3 & 72.0 & 71.8 & 63.6 & 52.7 & 78.5 & 72.0 & 57.7 & 51.6 & 68.1 \\
			GVB-GD & \textbf{57.0} & 74.7 & 79.8 & 64.6 & 74.1 & 74.6 & 65.2 & \textbf{55.1} & 81.0 & 74.6 & 59.7 & 84.3 & 70.4 \\
			RSDA-DANN & 51.5 & \textbf{76.8} & 81.1 & 67.1 & 72.1 & \textbf{77.0} & 64.2 & 51.1 & \textbf{81.8} & \textbf{74.9} & 55.9 & \textbf{84.5} & 69.8 \\
			DSAN & 54.4 & 70.8 & 75.4 & 60.4 & 67.8 & 68.0 & 62.6 & 55.9 & 78.5 & 73.8 & 60.6 & 83.1 & 67.6 \\ \bottomrule
		\end{tabular}%
	}
\end{table*}

\subsubsection{Parameter Setting}In both UDA and SDA scenarios, we do not have massive labeled target samples, so we can not perform a standard cross-validation procedure to obtain the optimal parameters. For a fair comparison, we cite the results from the original papers or run the code provided by the authors. Following \cite{Li2018}, we grid-search the hyper-parameter space and report the best results. For GFK, JDA, DICD, JGSA, DICE and MEDA, the optimal reduced dimension is searched in $d \in \{10,20,...,100\}$. The best value of regularization parameter for projection is searched in the range of $\{0.01,0.02,0.05,0.1,0.2,0.5,1.0\}$. For the two recent methods, SPL and MCS, we adopt the default parameters used in their public codes or follow the procedures for tuning parameters according to the corresponding original papers. For our method, we fix $d = 100$, $\gamma = 5.0$ and $k = 10$ leaving $\alpha$, $\beta$ tunable. We obtain the optimal parameters by searching $\alpha$, $\beta \in [0.01,0.02,0.05,0.1,0.2,0.5,1.0]$.

\subsection{Unsupervised Domain Adaptation}
\subsubsection{Experimental results on UDA}
\textbf{Results on Office31 dataset}. Table \ref{office31} summarizes the classification results on the Office31 dataset, where the highest accuracy of each cross-domain task is boldfaced. We can observe that CMMS has the best average performance, with a 1.1$\%$ improvement over the optimal competitor MCS. CMMS achieves the highest results on 2 out of 6 tasks, while MCS only works the best for task A$\rightarrow$W with just 0.4$\%$ higher than CMMS. Generally, SPL, MCS and CMMS perform better than those methods that classify target samples independently, which demonstrates that exploring the structure information of data distribution can facilitate classification performance. However, compared with SPL and MCS, CMMS further mines and exploits the inherent local manifold structure of target data to promote cluster prototypes learning, thus can lead to a better performance.

\textbf{Results on Office-Caltech10 dataset}. The results on  Office-Caltech10 dataset with SURF features are listed in Table \ref{office_surf}. Regarding the average accuracy, CMMS shows a large advantage which improves 1.7$\%$ over the second best method MEDA. CMMS is the best method on 4 out of 12 tasks, while MEDA only wins two tasks. On C$\rightarrow$A, D$\rightarrow$A and C$\rightarrow$W, CMMS leads MEDA by over 4.5$\%$ margin. Following \cite{Wang2019-SPL}, we also employ the DeCAF$_6$ features, and the classification results are shown in Table \ref{office_decaf}. CMMS is superior to all competitors with regard to the average accuracy and works the best or second best for all tasks except for C$\rightarrow$W. Carefully comparing the results of SURF features and DeCAF$_6$ features, we can find that SPL and MCS prefer to deep features. Nevertheless, CMMS does not have such a preference, which illustrates that CMMS owns better generalization capacity.

\textbf{Results on MSRC-VOC2007 and MNIST-USPS datasets}. The experimental results on the MSRC-VOC2007 and MNIST-USPS datasets are reported in Table \ref{msrc}. On MSRC-VOC2007 dataset, the average classification accuracy of CMMS is 55.4$\%$, which is significant higher than those of all competitors. Especially, on task V$\rightarrow$M, CMMS gains a huge performance improvement of 15.3$\%$ compared with the second best method SPL, which verifies the significant effectiveness of our proposal. On MNIST-USPS dataset, CMMS achieves the highest average classification accuracy among all methods, which confirms the superior generalization capacity of our proposal. 

\begin{table*}[]
	\centering
	\caption{Classification accuracies (\%) on Visda2017 dataset ($\alpha = 0.1, \beta = 0.01$). Deep learning methods are listed  below our CMMS. $^*$  indicates the results are obtained using  ResNet-101 features or ResNet-101 model, and $-$ represents the results are unavailable. The results of JGSA  and LPJT are cited from $\cite{Li2019-LPJT}$. The highest accuracy of each task with traditional/deep learning methods is boldfaced.}
	\label{visda}
	\scalebox{0.75}{%
		\begin{tabular}{cccccccccccccc}
			\toprule
			Method & aeroplane & bicycle & bus & car & horse & knife & motorcycle & person & plant & skateboard & train & truck & Mean \\ \midrule
			1-NN & 90.0 & 55.2 & 66.1 & 43.0 & 56.8 & 26.2 & \textbf{91.4} & 31.4 & 35.3 & 31.4 & 70.6 & 36.6 & 52.8 \\
			SVM & 77.1 & 41.8 & 73.5 & 53.3 & 53.2 & 21.5 & 82.9 & 34.9 & 57.2 & 51.6 & \textbf{83.5} & 24.9 & 50.2 \\
			GFK & 88.5 & 55.9 & 67.9 & 42.4 & 65.3 & 30.0 & 90.6 & 36.4 & 42.0 & 32.3 & 73.3 & 34.1 & 54.9 \\
			JDA & 91.2 & 73.2 & 67.4 & 52.5 & 88.9 & 67.6 & 85.1 & 69.0 & 80.7 & 48.1 & 81.6 & 31.9 & 69.8 \\
			JGSA$^*$ & 91.1 & 63.8 & 64.2 & 47.5 & 92.6 & 42.3 & 79.8 & 76.5 & 84.3 & 50.1 & 72.2 & 26.5 & 65.8 \\
			LPJT$^*$ & 93.0 & 80.3 & 66.5 & 56.3 & \textbf{95.8} & 70.3 & 74.2 & \textbf{83.8} & \textbf{91.7} & 40.0 & 78.7 & 57.6 & 74.0 \\
			MCS & \textbf{94.9} & \textbf{80.6} & 79.1 & 56.7 & 92.8 & 94.8 & 82.9 & 70.0 & 84.4 & \textbf{91.3} & 81.4 & 43.1 & 79.3 \\
			CMMS & 93.9 & 76.9 & \textbf{86.1} & \textbf{58.7} & 93.2 & \textbf{96.3} & 84.1 & 80.1 & 83.7 & 87.2 & 80.0 & \textbf{62.7} & \textbf{81.9} \\ \midrule
			SimNet & \textbf{94.5} & 80.2 & 69.5 & 43.5 & 89.5 & 16.6 & 76.0 & 81.1 & 86.4 & 76.4 & 79.6 & 41.9 & 69.6 \\
			Wu $\emph{et al.}$ \cite{Wu et.al} & 91.5 & 80.4 & 82.3 & 59.2 & 90.1 & 62.9 & 85.3 & 75.0 & 87.3 & 70.6 & 78.5 & 28.4 & 74.3 \\
			3CATN & $-$ & $-$ & $-$ & $-$ & $-$ & $-$ & $-$ & $-$ & $-$ & $-$ & $-$ & $-$ & 73.2 \\
			TPN & 93.7 & \textbf{85.1} & 69.2 & 81.6 & \textbf{93.5} & 61.9 & 89.3 & 81.4 & 93.5 & \textbf{81.6} & 84.5 & 49.9 & \textbf{80.4} \\
			DTA & 93.1 & 70.5 & \textbf{83.8} & \textbf{87.0} & 92.3 & 3.3 & 91.9 & \textbf{86.4} & 93.1 & 71.0 & 82.0 & 15.3 & 76.2 \\
			DM-ADA$^*$ & $-$ & $-$ & $-$ & $-$ & $-$ & $-$ & $-$ & $-$ & $-$ & $-$ & $-$ & $-$ & 75.6 \\
			ALDA & 87.0 & 61.3 & 78.7 & 67.9 & 83.7 & 89.4 & 89.5 & 71.0 & \textbf{95.4} & 71.9 & \textbf{89.6} & 33.1 & 76.5 \\
			GVB-GD & $-$ & $-$ & $-$ & $-$ & $-$ & $-$ & $-$ & $-$ & $-$ & $-$ & $-$ & $-$ & 75.3 \\ 
			RSDA-DANN & $-$ & $-$ & $-$ & $-$ & $-$ & $-$ & $-$ & $-$ & $-$ & $-$ & $-$ & $-$ & 75.8 \\
			DSAN$^*$ & 90.9 & 66.9 & 75.7 & 62.4 & 88.9 & \textbf{77.0} & \textbf{93.7} & 75.1 & 92.8 & 67.6 & 89.1 & 39.4 & 75.1 \\
			\bottomrule
		\end{tabular}%
	}
\end{table*}

\begin{table*}[]
	\centering
	\caption{The classification performance(\%) of CMMS, the five variants of CMMS and JDA.}
	\label{ablation_study}
	\scalebox{0.85}{%
		\begin{tabular}{p{3.5cm}<{\centering}p{1.1cm}<{\centering}p{1.1cm}<{\centering}p{1.1cm}<{\centering}p{1.1cm}<{\centering}p{1.1cm}<{\centering}p{1.1cm}<{\centering}p{1.1cm}<{\centering}p{1.1cm}<{\centering}}
			\toprule
			Dataset & JDA & CMMS$_{cm}$ & CMMS$_{rm}$ & CMMS$_{pa}$ & CMMS$_{op}$ & CMMS$_{ds}$ & CMMS \\ \midrule
			Office31 & 73.4 & 74.6 & 75.8 & 76.2 & 77.3 & 74.8 & \textbf{77.6} \\
			Office-Caltech10(SURF) & 46.3 & 51.4 & 52.4 & 53.2  & 53.4 & 51.8 & \textbf{54.4} \\
			Office-Caltech10 (DeCAF$_6$) & 89.1 & 92.8 & 92.9 & 93.2  & 93.5 & 92.3 & \textbf{93.6} \\
			MSRC-VOC2007 & 37.6 & 53.2 & 53.5 & {55.1}  & 55.1 & 54.9 & \textbf{55.4} \\
			MNIST-USPS & 92.6 & 93.9 & 93.8 & 94.4 & 94.4 & 93.6 & \textbf{94.8} \\
			Office-Home & 61.9 & 70.2 & 70.7 & 71.2 & 71.7 & 70.6 & \textbf{72.0} \\
			Visda2017 & 69.8 & 73.8 & 80.0 & 78.5 & 78.4 & 74.9 & \textbf{81.9} \\ \midrule
			Average & 67.2 & 72.8 & 74.2 & 74.5 & 74.8 & 73.3 & \textbf{75.7} \\ \bottomrule
		\end{tabular}%
	}
\end{table*}

\textbf{Results on Office-Home dataset}. For fairness, we employ the deep features recently released by \cite{Wang2019-SPL}, which are extracted using the Resnet50 model pre-trained on ImageNet. Table \ref{officehome} summarizes the classification accuracies. For traditional methods, CMMS outperforms the second best method SPL in average performance, and achieves the best performance on 9 out of all 12 tasks while SPL only works the best for task Re$\rightarrow$Pr with just 0.8$\%$ superiority to CMMS. This phenomenon shows that even the target samples are well clustered within the deep feature space, exploiting the inherent local manifold structure is still crucial to the improvement of the classification performance. Besides, we show the results of eight recent deep learning methods for comparison, which all adopt the Resnet50 model as the basic network structure. In terms of the average accuracy, CMMS performs better than all deep methods with 1.5$\%$ improvement at least, which validates the significant superiority of our proposal.

\textbf{Results on Visda2017 dataset}. Visda2017 is a large-scale dataset that is published recently and has been widely employed by various deep learning methods. Following \cite{TPN}, we take the per category classification accuracy of target data and the mean of accuracy over all 12 categories as evaluation metric. All experiments are conducted with MATLAB R2016b on a standard Linux server with Intel(R) Xeon(R) CPU E5-2678 v3 @ 2.50GHz CPU, 251GB RAM. The experimental results of six traditional methods and eleven deep methods are provided in Table \ref{visda}. The results of some other traditional methods are not reported due to the limited memory. We can observe that CMMS achieves the highest mean accuracy among all traditional and deep methods, which demonstrates that our CMMS has outstanding capability to deal with the domain adaptation tasks with more data samples.

\subsubsection{Ablation Study \label{Analytical Experiments}}
To understand our CMMS more deeply, we propose five variants of CMMS: \textbf{a)} CMMS$_{cm}$, only considers class Centroid Matching for two domains, i.e., the combination of Eq.(\ref{E2}), Eq.(\ref{E3}) and Eq.(\ref{E7}); \textbf{b)} CMMS$_{rm}$, does not utilize the local manifold structure of target data, i.e., removing Eq.(\ref{E4}) from our objective function Eq.(\ref{E8}); \textbf{c)} CMMS$_{pa}$, considers local manifold structure of target data by Predefining Adjacency matrix in the  original feature space, i.e., replacing Eq.(4) with the Laplacian regularization; \textbf{d)} CMMS$_{ds}$, exploits the Discriminative Structure information of target domain via assigning pseudo-labels to target data and then minimizing the intra-class scatter in the projected space like Eq.(\ref{E5}); \textbf{e)} CMMS$_{op}$, carries out manifold learning withOut Projection, {\it  i.e,} initializing target adjacency matrix in the original feature space using local manifold self-learning strategy and keeping it unchanged in the subsequent iterations. 
Table \ref{ablation_study} shows the results of CMMS and all variants. 
The results of classical JDA are also provided. Based on this table, we will analyze our approach in more detail as follows.

\textbf{Effectiveness of class centroid matching.} 
CMMS$_{cm}$ consistently precedes JDA on six datasets, which confirms the remarkable superiority of our proposal to the MMD based pioneering approach. By the class centroid matching strategy, we can make full use of the structure information of data distribution, thus target samples are supposed to present favourable cluster distribution.
To have a clear illustration, in Fig. \ref{clustering}, we display the $t$-SNE \cite{Maaten2008} visualization of the target features in the projected space on task V$\rightarrow$M of MSRC-VOC2007 dataset. We can observe that JDA features are mixed together while CMMS$_{cm}$ features are well-separated with cluster structure, which verifies the significant effectiveness of our class centroid matching strategy.
\begin{figure}[]
	\setlength{\abovecaptionskip}{0pt}
	\setlength{\belowcaptionskip}{0pt}
	\renewcommand{\figurename}{Figure}
	\centering
	\includegraphics[width=0.35\textwidth]{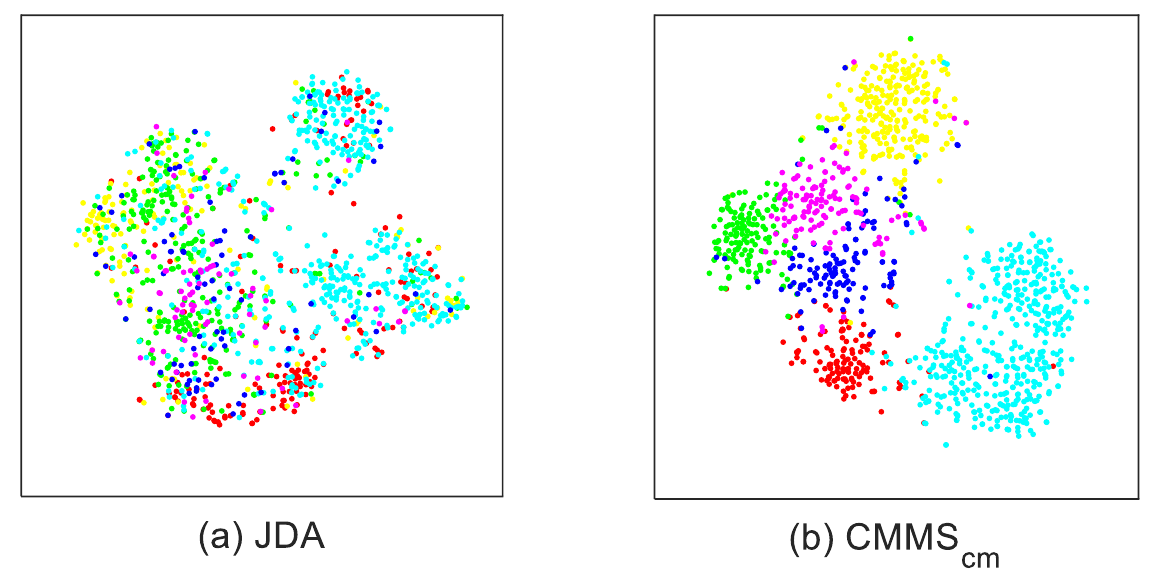}
	\caption{Target feature visualization of JDA and CMMS$_{cm}$ on task V$\rightarrow$M. Samples in different classes are presented by different colors.}
	\label{clustering}
\end{figure}

\textbf{Effectiveness of local manifold self-learning strategy for target data.} CMMS$_{pa}$ performs better than CMMS$_{rm}$ on nearly all datasets, which indicates that exploiting the local manifold structure of target samples help to classify them more successfully, even though the manifold structure is not so reliable. However, if we can capture it more faithfully, we can achieve a superior performance, which is verified by comparing CMMS with CMMS$_{pa}$. For a better understanding, we show the visualization of target adjacency matrix on task A$\rightarrow$D (SURF) in Fig. \ref{similarity}. These matrices are obtained by either the self-learned distance or the predefined distances which include Euclidean distance, heatkernel distance with kernel width 1.0 and cosine distance. 
As we can see from Fig. \ref{similarity}, all predefined distances
tend to incorrectly connect unrelated samples, and hardly capture the inherent local manifold structure of target data. However, the
self-learned distance can adaptively build the connections between intrinsic similar samples, thus can improve the classification performance. Moreover, CMMS $_{op}$ is inferior to CMMS on all datasets, which indicates the necessity of operating manifold learning in the projected space. In such way, the curse of dimensionality could be relieved, and the inherent local manifold structure of target data can be captured more effectively. Generally, CMMS$_{ds}$ performs much worse than CMMS and even worse than CMMS$_{rm}$ which verifies that utilizing the discriminative information of target domain via assigning pseudo-labels to target samples independently is far from enough to achieve satisfactory results.
The reason is that the pseudo-labels may be inaccurate and could cause error accumulation during learning, and thus the performance is degraded dramatically. In summary, our local manifold self-learning strategy can effectively enhance the utilization of structure information contained in target data.
\begin{figure}[h]
	\setlength{\abovecaptionskip}{0pt}
	\setlength{\belowcaptionskip}{0pt}
	\renewcommand{\figurename}{Figure}
	\centering
	\includegraphics[width=0.35\textwidth]{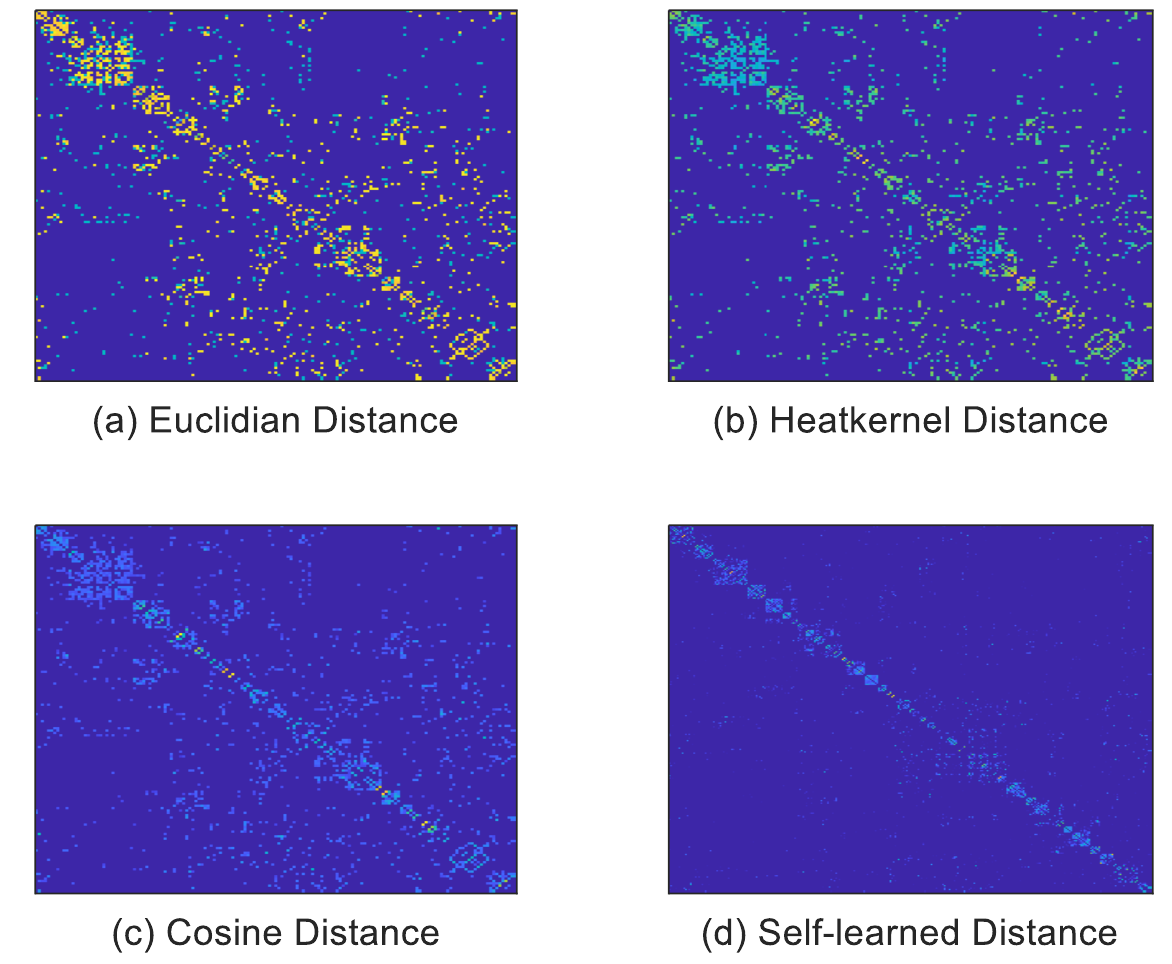}
	\caption{Target adjacency matrix visualization on task A$\rightarrow$D (SURF) obtained by predefined or self-learned distances.}
	\label{similarity}
\end{figure}

\subsubsection{Analytical Experiments}
In this part, we further conduct several experiments to pursue deeper insight for our CMMS approach. 

\textbf{Is the target prototype cluster assignment really equal to the target class?} 
To validate whether the target prototype cluster assignment is really equal to the actual target class, in Fig. \ref{r2} (a), we plot the evolution of classification error for each class given target ground-truth labels over training iterations on task V$\rightarrow$M.
As we can see, for most target prototype clusters (5 out of 6), the trend of classification error meets the ideal situation, {\it i.e.}, it would decrease over iterations. However, we find that the prototype cluster 3 is an exception. 
To figure out the reason, in Table \ref{m}, we summarize the sample size and proportion of each class for target data. For a more intuitive understanding, we further display the visualization result of target features in the projected space according to the prototype cluster pseudo-labels  (Fig. \ref{r2} (d)) and the ground-truth labels (Fig. \ref{r2} (e)). By the comprehensive analysis of Fig. \ref{r2} (d), (e) and Table \ref{m}, we argue that the potential reason could be that the sample size of target class 3 is very small and its sample distribution is easily mixed up with target class 6. In the training iteration process, more and more samples from target class 6 could be  divided into target prototype cluster 3, leading to the centroid of target prototype cluster 3 deviates from the actual centroid of target class 3.  One promising solution may be that during each training iteration, we can first include the target samples with high-confidence pseudo-label for each category to train by some selective strategies, e.g., self-paced learning, and then maximize the distances of different prototype clusters. 
\begin{table}[]
	\centering
	\caption{Size and proportion ($\%$) for each class of target data (V$\rightarrow$M).}
	\label{m}
	\scalebox{0.85}{%
		\begin{tabular}{ccccccc}
			\toprule
			Class & 1 & 2 & 3 & 4 & 5 & 6 \\ \midrule
			Size/Proportion & 58/4.6 & 272/21.4 & 72/5.7 & 495/39.0 & 182/14.3 & 190/15.0 \\ \bottomrule
		\end{tabular}%
	}
\end{table}

\textbf{Is there a semantic mismatch between the target prototype cluster and its nearest source class?} For a target prototype cluster, our CMMS assigns a pseudo-label $l_i^{cmms} (i=1,\ldots,C)$ to it by searching the nearest class centroid of source data.  
When the ground-truth labels are given, the true-label of a target prototype cluster could be determined by the majority voting of its member's label and here we call it the ``oracle label" (denoted as $l_i^{oracle}$).  In Fig. \ref{r2} (b), we plot two sets of curves. 
The solid lines display the distance between each target prototype cluster centroid and its corresponding real source class centroid ({\it i.e.,} the source class with $l_i^{oracle}$ label),
while the dashed lines display the distance between each target prototype cluster centroid and its nearest source class centroid ({\it i.e.,} the source class with $l_i^{cmms}$ label). 
Ideally, the solid line shall coincide with the dashed line for each target prototype cluster, which means an exact semantic match between the target prototype cluster and its nearest source class. From Fig. \ref{r2} (b), we observe that only target prototype cluster 3 semantically mismatches with source class 3. For the other 5 target prototype clusters, the solid lines overlap the dashed lines, and  the downward trend of these lines indicates that our CMMS could improve their corresponding semantic match over the training iterations.

\textbf{Does each target prototype cluster’s members become homogeneous over training progress?} In Fig. \ref{r2} (c), we display the evolution of homogeneity for each target prototype cluster on task V$\rightarrow$M.
It is obvious that all curves rise, which indicates that each target prototype cluster become homogeneous over training process. The homogeneity of 4 target clusters are more than 50$\%$, which means that our proposal can produce high-purity prototype clusters. The low homogeneity of target prototype cluster 3 further confirm our aforementioned analysis for  Fig. \ref{r2} (a), {\it i.e.,} the sample size of real target class 3 is very small and the samples of other target classes could be incorrectly divided into it.

\begin{figure*}[]
	\setlength{\abovecaptionskip}{0pt}
	\setlength{\belowcaptionskip}{0pt}
	\renewcommand{\figurename}{Figure}
	\centering
	\includegraphics[width=0.83\textwidth]{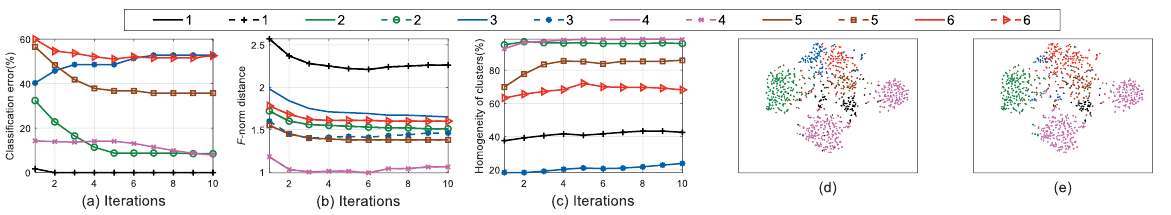}
	\caption{(a) Classification error given target ground-truth versus prototype cluster over training iterations on task V$\rightarrow$M. The number $i$ indicates the $i$th prototype cluster. (b) The $F$-norm distance between each prototype cluster centroid and its source data correspondence class centroid/nearest source class centroid over training iterations on task V$\rightarrow$M.  (c) The homogeneity of each cluster over training iterations on task V$\rightarrow$M. (d) Visualization of target features in the projected space with pseudo-labels. (e) Visualization of target features in the projected space with ground-truth labels.}
	\label{r2}
\end{figure*}

\begin{figure*}[]
	\setlength{\abovecaptionskip}{0pt}
	\setlength{\belowcaptionskip}{0pt}
	\renewcommand{\figurename}{Figure}
	\centering
	\includegraphics[width=0.85\textwidth]{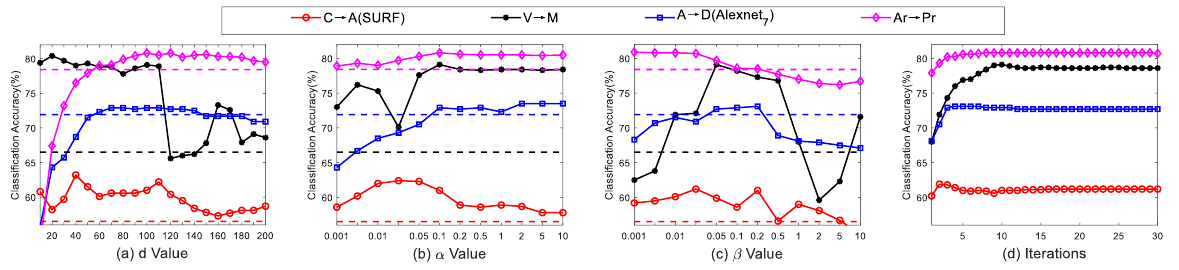}
	\caption{Parameter sensitivity analysis with respect to $d$, $\alpha$, $\beta$ and convergence analysis.}
	\label{parameter_converge}
\end{figure*}

\textbf{How does our CMMS perform on imbalanced dataset?} To evaluate the generalization ability of our CMMS on imbalanced dataset, an additional experiment is conducted on a new dataset. The source domain of this new dataset consists of Caltech10, dslr and webcam of Office-Caltech10 dataset. For its target domain, we design a new amazon domain. Specifically, for each category of the original amazon domain, we extract their first 82 images. The purpose of such operation is to force the sample proportion of each class to be equal in the new amazon domain. 
\begin{table}[h]
	\centering
	\caption{Mean of average classification accuracies (\%) for data imbalance issue (r: minimum retention rate).}
	\label{msrc_s}
	\scalebox{0.85}{%
		\begin{tabular}{cccccccccc}
			\toprule
			r  & 0.1 & 0.2 & 0.3 & 0.4 & 0.5 & 0.6 & 0.7 & 0.8 & 0.9 \\ \midrule
			EasyTL  & 32.6 & 34.1 & 36.2 & 38.4 & 39.6 & 40.0 & 42.2 & 42.5 & 43.1 \\
			CAPLS  & 32.3 & 34.5 & 35.9 & 38.3 & 39.2 & 39.3 & 41.1 & 41.6 & 41.9 \\
			MEDA  & 32.5 & 34.5 & 37.7 & 40.2 & 41.7 & 42.8 & 44.2 & 45.5 & 46.8 \\
			MCS  & 32.9 & 36.3 & 39.4 & 40.1 & 41.1 & 41.9 & 43.0 & 42.8 & 44.1 \\
			SPL  & 34.1 & 37.7 & 38.0 & 40.4& 40.9 & 42.5 & 43.3 & 43.1 & 45.3 \\
			CMMS  & \textbf{35.6} & \textbf{38.0} &\textbf{39.5} & \textbf{42.3}& \textbf{43.6} & \textbf{44.7} & \textbf{46.8} & \textbf{47.8} & \textbf{48.0} \\ \bottomrule
		\end{tabular}%
	}
\end{table}
We choose the new amazon domain as target and the rest as sources by turns, {\it i.e.}, we have three cross-domain tasks. And then, similar to the setting of \cite{DSEC} to study the performance of clustering algorithm on imbalanced datasets, we randomly sample nine subsets from the new dataset with different minimum retention rates. For the minimum retention rate $r$, the samples of the first category for target data will be kept with probability $r$, the last category with probability 1, and the other categories linearly in between.
We summarize the results in Table \ref{msrc_s}, where we report the mean of average classification accuracy on three tasks over five times. We  observe that data imbalance issue has negative influence on all methods. However, our CMMS could still obtain the highest accuracy with various minimum retention rates, which illustrates the advantage of our method on imbalanced dataset.
One of the potential reasons could be that the local manifold self-learning strategy introduced by our CMMS can capture more inherent structure of target data, thus making the influence of the datasets imbalance reduced. Even so, this insight inspires us to explore better solution of such issue in our future work.

\begin{table*}[h]
	\centering
	\caption{Classification accuracies (\%) on Office-Caltech10 dataset with SURF features for semi-supervised domain adaptation}
	\label{office_sda}
	\scalebox{0.75}{%
		\begin{tabular}{ccccccccccc}
			\toprule
			Task & SVM$_{t}$ & SVM$_{st}$ & MMDT & DTMKL-f & CDLS & ILS & TFMKL-S & TFMKL-H  & CMMS \\ \midrule
			A$\rightarrow$C & 31.1 & 42.4 & 36.4 & 40.5 & 37.1 & 43.6 & \textbf{43.8} & 43.5 & 43.0 \\
			A$\rightarrow$D & 56.9 & 47.7 & 56.7 & 45.9 & 61.9 & 49.8 & \textbf{62.0} & 57.3 & 61.8 \\
			A$\rightarrow$W & 62.8 & 50.1 & 64.6 & 47.9 & 69.3 & 59.7 & 70.9 & \textbf{71.4} & 69.2 \\
			C$\rightarrow$A & 44.7 & 47.2 & 49.4 & 47.3 & 52.5 & 55.1 & 54.2 & 54.4 &  \textbf{57.7} \\
			C$\rightarrow$D & 56.3 & 52.0 & 56.5 & 52.2 & 59.8 & 56.2 & 60.1 & \textbf{60.4} & 59.1 \\
			C$\rightarrow$W & 60.0 & 54.5 & 63.8 & 54.4 & \textbf{68.7} & 62.9 & 68.1 & 68.6 & 67.2 \\
			D$\rightarrow$A & 44.7 & 44.3 & 46.9 & 41.6 & 51.8 & 55.0 & 53.1 & 50.8 &  \textbf{55.2} \\
			D$\rightarrow$C & 31.3 & 36.8 & 34.1 & 36.0 & 36.9 & \textbf{41.0} & 38.9 & 37.9 & 39.5 \\
			D$\rightarrow$W & 62.0 & 80.6 & 74.1 & 77.6 & 70.7 & 80.1 & 79.1 & 76.7 & \textbf{82.5}\\
			W$\rightarrow$A & 45.4 & 45.2 & 47.7 & 45.3 & 52.3 & 54.3 & \textbf{54.4} & 54.0 & 53.4 \\
			W$\rightarrow$C & 29.7 & 36.1 & 32.2 & 36.3 & 35.1 & \textbf{38.6} & 36.2 & 34.9 & 37.7 \\
			W$\rightarrow$D & 56.5 & 71.2 & 67.0 & 69.6 & 61.3 & 70.8 & 69.1 & 69.3 & \textbf{72.9} \\ \midrule
			Average & 48.4 & 50.7 & 52.5 & 49.6 & 54.8 & 55.6 & 57.5 & 56.6 &  \textbf{58.3} \\ \bottomrule
		\end{tabular}%
	}
\end{table*}
\begin{table*}[h]
	\centering
	\caption{Classification accuracies (\%) on MSRC-VOC2007 dataset for semi-supervised domain adaptation}
	\label{msrc_sda}
	\scalebox{0.70}{%
		\begin{tabular}{cccccccccccc}
			\toprule
			& Task & SVM$_{t}$ & SVM$_{st}$ & MMDT & DTMKL-f & CDLS & ILS & TFMKL-S & TFMKL-H & CMMS \\ \midrule
			\multicolumn{1}{c|}{\multirow{3}{*}{n$_l$ = 2}} & M$\rightarrow$V & 28.9 & 38.5 & 35.1 & 36.8 & 30.2 & 34.2 & \textbf{38.2} & 36.6 & 35.8 \\
			\multicolumn{1}{c|}{} & V$\rightarrow$M & 55.9 & 55.5 & 59.9 & 64.1 & 55.8 & 49.9 & 68.5 & 71.0 & \textbf{76.8} \\ \cmidrule{2-11}
			\multicolumn{1}{c|}{} & Average & 42.4 & 47.0 & 47.5 & 50.5 & 43.0 & 42.1 & 53.4 & 53.8 & \textbf{56.3} \\ \midrule
			\multicolumn{1}{c|}{\multirow{3}{*}{n$_l$ = 4}} & M$\rightarrow$V & 30.2 & 39.0 & 36.0 & 36.9 & 31.7 & 35.4 & \textbf{38.4} & 36.8 & 36.2 \\
			\multicolumn{1}{c|}{} & V$\rightarrow$M & 64.4 & 56.6 & 62.1 & 65.0 & 57.3 & 50.2 & 70.4 & 71.4 & \textbf{77.7} \\ \cmidrule{2-11}
			\multicolumn{1}{c|}{} & Average & 47.3 & 47.8 & 49.1 & 51.0 & 44.5 & 42.8 & 54.4 & 54.1 & \textbf{57.0} \\ \bottomrule
		\end{tabular}%
	}
\end{table*}

\subsubsection{Parameters Sensitivity and Convergence Analysis}
In our CMMS, there are three tunable parameters: $\alpha$, $\beta$ and  $d$.  We  conducted extensive parameter sensitivity analysis with respect to the three parameters in a wide range. We vary one parameter once and fix the others as the optimal values. The results of C$\rightarrow$A (SURF), V$\rightarrow$M, A$\rightarrow$D (Alexnet$_7$) and Ar$\rightarrow$ Pr are reported in Fig. \ref{parameter_converge} (a) $\sim$ (c). Meanwhile, to demonstrate the effectiveness of our CMMS, we also display the results of the best competitor as the dashed lines.

First, we investigate the sensitivity of $d$ in a wide range $[10,20,...,200]$. As depicted in Fig. \ref{parameter_converge} (a), the optimal value of $d$ varies on different datasets. Nevertheless, our CMMS can consistently achieve better performance within a wide range $d \in [90,110]$. Note that, all the experimental results of our CMMS reported in this paper are achieved by fixing $d$ to 100, which demonstrates the potential of performance improvements of our approach. Then, we run our CMMS as $\alpha$ varies from 0.001 to 10.0. From Fig. \ref{parameter_converge} (a), it is observed that when the value of $\alpha$ is much small, it may not be contributed to the improvement of performance. Whereas, with the appropriate increase of $\alpha$, the clustering process of target data is emphasized, and thus our CMMS can exploit the cluster structure information more effectively. We find that, when $\alpha$ is located within a wide range $[0.1, 10.0]$, our proposal can consistently achieve optimal performance. Next, we evaluate the influence of parameter $\beta$ on our CMMS by varying the value from 0.001 to 10.0. It is infeasible to determine the optimal value of $\beta$, since it highly depends on the domain prior knowledge of the datasets. However, we empirically find that, when $\beta$ is located within the range $[0.05, 0.2]$, our CMMS can obtain better classification results than the most competitive competitor.
Furthermore, we  observe that in Fig. \ref{parameter_converge} (a) $\sim$ (c), the curves of task V$\rightarrow$M show  higher fluctuations compared to other tasks. The potential reason may be that the data imbalance on task V$\rightarrow$M makes our CMMS more sensitive to the setting of parameter values. Despite this, within the reasonable range of three parameter settings discussed above, our proposal can still significantly outperform the rivals. We also display the convergence analysis in Fig. \ref{parameter_converge} (d). We can see that CMMS can quickly converge within 10 iterations.

\begin{table}[h]
	\centering
	\caption{Classification accuracies(\%) on Office-Caltech10 dataset  for SURF $\rightarrow$ DeCAF$_6$}
	\label{office_surf_decaf}
	\scalebox{0.75}{%
		\begin{tabular}{cccccccc}
			\toprule
			Task & SVM$_t$ & MMDT & SHFA & CDLS & Li \cite{Li2018-PA} & CDSPP & CMMS \\ \midrule
			A$\rightarrow$C & 79.8 & 78.1 & 79.5 & 83.8 & 84.3 & 87.9 & \textbf{88.6} \\
			A$\rightarrow$W & 90.5 & 89.4 & 90.1 & 93.6 & 93.2 & \textbf{94.9} & 93.2 \\
			C$\rightarrow$A & 89.0 & 87.5 & 88.6 & 90.7 & 92.6 & 92.5 & \textbf{93.2} \\
			C$\rightarrow$W & 90.5 & 88.9 & 89.6 & 92.5 & 92.1 & \textbf{94.0} & 93.2 \\
			W$\rightarrow$A & 89.0 & 88.3 & 88.7 & 90.5 & \textbf{93.9} & 92.3 & 93.3 \\
			W$\rightarrow$C & 79.8 & 78.6 & 79.7 & 82.1 & 84.9 & 88.1 & \textbf{88.5} \\ \midrule
			Average & 86.4 & 85.1 & 86.0 & 88.9 & 90.2 & 91.6 & \textbf{91.7} \\ \bottomrule
		\end{tabular}%
	}
\end{table}
\begin{table}[h]
	\centering
	\caption{Classification accuracies(\%) on Multilingual Reuters Collection dataset (Spanish as the target domain)}
	\label{text}
	\scalebox{0.75}{%
		\begin{tabular}{cccccccc}
			\toprule
			Source & SVM$_t$ & MMDT & SHFA & CDLS & Li  \cite{Li2018-PA} & CDSPP & CMMS \\ \midrule
			English & \multirow{4}{*}{67.4} & 67.8 & 68.9 & 70.8 & 71.1 & 69.1 & \textbf{74.7} \\
			French &  & 68.3 & 69.1 & 71.2 & 71.2 & 69.0 & \textbf{74.4} \\
			German &  & 67.7 & 68.3 & 71.0 & 70.9 & 68.8 & \textbf{74.7} \\
			Italian &  & 66.5 & 67.5 & 71.7 & 71.5 & 68.8 & \textbf{74.6} \\ \midrule
			Average & 67.4 & 67.6 & 68.5 & 71.2 & 71.2 & 68.9 & \textbf{74.6} \\ \bottomrule
		\end{tabular}%
	}
\end{table}

\subsection{Semi-supervised Domain Adaptation}
\subsubsection{Results in Homogeneous Setting} The averaged classification results of all methods on the Office-Caltech10 dataset over 20 random splits are shown in Table \ref{office_sda}. We can observe that regarding the total average accuracy, our CMMS can obtain $0.8 \%$ improvement compared with the second best method TFMKL-S.
The results on the MSRC-VOC2007 dataset over 5 random splits are shown in Table \ref{msrc_sda}. Some results are cited from \cite{Wang2019}.
Compared with the most comparative competitors, CMMS can achieve $2.5 \%$ and $2.6 \%$ improvement when the number of labeled target samples in per class is set to 2 and 4, respectively.

\subsubsection{Results in Heterogeneous Setting} The results on the Office-Caltech10 and Multilingual Reuters Collection datasets are listed in Table \ref{office_surf_decaf} and Table \ref{text}. Some results are cited from \cite{Li2018-PA}. We can observe that our CMMS  achieves the optimal performance in terms of the average accuracy on both datasets. Specifically, compared with the best competitors, $0.1 \%$ and $3.4 \%$ improvements are obtained. Especially, CMMS works the best for 7 out of all 10 tasks on two datasets, which adequately confirms the excellent generalization capacity of our CMMS in the heterogeneous setting.

\section{Conclusions and Future Work \label{Conclusions}}
In this paper,  a novel domain adaptation method named CMMS is proposed. %{Unlike most of existing methods that generally assign pseudo-labels to target data independently, CMMS} 
Unlike most of existing methods that generally assign pseudo-labels to target data independently, CMMS makes label prediction for target samples by the class centroid matching of source and target domains, such that the data distribution structure of target domain can be exploited. To explore the structure information of target data more thoroughly, a local manifold self-learning strategy is further introduced into CMMS, which can capture the inherent local manifold structure of target data by adaptively learning the data similarity in the projected space. The CMMS optimization problem is not convex with all variables, and thus an iterative optimization algorithm is designed to solve it, whose computational complexity and convergence are carefully analyzed. 
We further extend CMMS to the semi-supervised scenario including both homogeneous and heterogeneous settings which are appealing and promising. Extensive experimental results on seven datasets reveal that CMMS significantly outperforms the baselines and several state-of-the-art methods in both unsupervised and semi-supervised scenarios. 

Future research will include the following: 1) Considering the computational bottleneck of CMMS optimization, we will design more efficient algorithm for the local manifold self-learning; 2) Except for the class centroids, we can introduce additional measure to represent the structure of data distribution, such as the covariance; 3) In this paper, we extend CMMS to the semi-supervised scenario in a direct but effective way. In the future, more elaborate design of semi-supervised methods is worth further exploration; 4) It would be a promising research direction to design a systematical way to determine the optimal value of $d$.

\section*{Acknowledgment}
The authors would like to thank the editors and the anonymous reviewers for their constructive comments which helped to improve the quality of this article. The authors are thankful for the financial support by the the Key-Area Research and Development Program of Guangdong Province 2019B010153002 and the  National Natural Science Foundation of China (U1636220, 61961160707 and 61772525).

% Can use something like this to put references on a page
% by themselves when using endfloat and the captionsoff option.
\ifCLASSOPTIONcaptionsoff
\newpage
\fi

\end{document}